\newtheorem{assumption}{Assumption}
\newtheorem{theorem}{Theorem}
\newtheorem{corollary}{Corollary}
\newtheorem{proposition}{Proposition}
\newtheorem{lemma}{Lemma}
\newtheorem{remark}{Remark}
\def\BibTeX{{\rm B\kern-.05em{\sc i\kern-.025em b}\kern-.08em
    T\kern-.1667em\lower.7ex\hbox{E}\kern-.125emX}}
\begin{document}

\title{\huge{Federated Learning for Discrete Optimal Transport with Large Population under Incomplete Information}
}

\author{Navpreet Kaur, Juntao Chen, and Yingdong Lu
\thanks{N. Kaur and J. Chen are with the Department of Computer and Information Sciences, Fordham University. E-mail: \{nkaur15,jchen504\}@fordham.edu}%
\thanks{Y. Lu is with IBM Research. E-mail: yingdong@us.ibm.com}%
}

\maketitle

\begin{abstract}
Optimal transport is a powerful framework for the efficient allocation of resources between sources and targets. However, traditional models often struggle to scale effectively in the presence of large and heterogeneous populations. In this work, we introduce a discrete optimal transport framework designed to handle large-scale, heterogeneous target populations, characterized by type distributions. We address two scenarios: one where the type distribution of targets is known, and one where it is unknown. For the known distribution, we propose a fully distributed algorithm to achieve optimal resource allocation. In the case of unknown distribution, we develop a federated learning-based approach that enables efficient computation of the optimal transport scheme while preserving privacy. Case studies are provided to evaluate the performance of our learning algorithm.
\end{abstract}


\section{Introduction}
\label{sec:intro}

Optimal transport is a fundamental framework used to design efficient resource distribution schemes between two parties, typically referred to as sources and targets~\cite{peyre2019computational}. Its application spans various domains, including supply chains (e.g., distributing raw materials from factories to manufacturers), task allocation (e.g., assigning tasks to employees), and more recently, domain adaptation in machine learning \cite{courty2016optimal}.

However, the traditional optimal transport paradigm assumes that complete information about all participants is available and that the transport network can be explicitly characterized~\cite{hughes2021fair}. 
This assumption is limiting in modern contexts, such as supply chains and machine learning, where the scale of the network can be vast and dynamic, involving many target nodes requesting resources. In such scenarios, it is often infeasible for resource providers to acquire comprehensive knowledge about the preferences of all target nodes. Therefore, the standard optimal transport framework must be extended to accommodate large-scale, heterogeneous populations of targets with incomplete information.

In this paper, we present a new optimal transport framework designed to address the challenges posed by large populations with varying preferences. The heterogeneity of target nodes is represented by a type distribution function, categorizing target nodes based on their preferences for resources. We focus on two key settings: one where the target type distribution is known and one where it is unknown. In the first case, we propose a fully distributed algorithm that optimally allocates resources among the target nodes. In the second case, where the target type distribution is unavailable to the source nodes (transport planners), we introduce a federated learning approach. This approach enables the source nodes to collaboratively and efficiently update transport schemes as new information about the target nodes is gradually collected.

Our federated learning algorithm is particularly advantageous in scenarios where privacy is a concern, as it allows each target node to calculate local solutions without sharing private data directly with the central planner \cite{9141214}. Instead, the local solutions are aggregated to form a global transport plan. This method has practical applications in privacy-preserving systems~\cite{NAGY2023110475} and mobile computing environments~\cite{9195793}.

The paper is organized as follows. Section \ref{sec:framework} establishes the large-scale discrete optimal transport framework for resource allocation. Section \ref{sec:OT_known} develops a distributed algorithm to compute the optimal transport plan when the target's type distribution is known, and Section \ref{sec:OT_unknown} proposes a federated learning algorithm to compute the solution when such information is unknown. Section \ref{sec:cases} presents case studies to showcase the developed mechanism. Section \ref{sec:conclusion} concludes the paper.

\subsection{Related Works}
\subsubsection{Federated Learning with Heterogeneous Data}
Extensive research has explored the application of federated learning algorithms to heterogeneous data. One of the primary challenges in federated learning is managing the inherent heterogeneity of data, which arises due to differences in statistical distributions, model architectures, and data representations across local devices. This variability complicates the process of aggregating local solutions at the central planner, as the non-uniform nature of the data makes it difficult to apply a standardized method across all local updates. Prior studies \cite{heterogeneous_federated_learning, li2020federated, a_survey_on_hetrogeneous} have identified these challenges, categorizing them into different forms of heterogeneity, including statistical heterogeneity, model heterogeneity, and data space heterogeneity. Various strategies have been proposed to address these issues. For example, \cite{li2020federated} and \cite{Ghosh2019RobustFL} offer solutions aimed at mitigating these complexities with \cite{Ghosh2019RobustFL} specifically introducing a clustering-based approach to enhance the robustness of the algorithm. In our work, we tackle these challenges by categorizing target nodes into distinct types based on probability distribution functions (PDFs). This categorization enables the federated learning algorithm to be efficiently applied within the optimal transport framework, even in scenarios involving large populations with incomplete information.

\subsubsection{Optimal Transport and Federated Learning}
Several studies have explored the integration of federated learning with the optimal transport framework to address the inherent challenges in federated learning. For instance, \cite{global_and_local}  introduces a method called Federated Prompts Cooperation via Optimal Transport, which employs prompt learning to tackle issues such as heterogeneity. Their approach involves learning in both local and global settings to capture target preferences while also establishing consensus among target nodes. Another significant challenge in federated learning is the non-identical distribution of data across participants. In this regard, \cite{9801676} proposes using optimal transport as a core learning algorithm to address these discrepancies, enhancing the efficiency and robustness of federated learning in such environments.

\section{Large-Scale Discrete Optimal Transport}\label{sec:framework}

In this section, we present the framework for discrete optimal transport for resource allocation over a large-scale network.

\subsection{Framework}
\subsubsection{{\bf Nodes: sources and targets}}
We consider a transport network with $N$ target nodes (resource receivers) and $M$ source nodes (resource providers). Since the number of target nodes $N$ is large, we categorize those nodes into different types. This type parameter is denoted by $x\in\mathcal{X}$, where the size of $\mathcal{X}$, $|\mathcal{X}|$, is finite. The number of type-$x$ target nodes is denoted as $n_x$ for $x\in\mathcal{X}$.
Type distribution of target nodes is denoted by $P_t$ with $P_t(x)$ representing the proportion of target nodes of type $x$, i.e., 
\begin{align*}
P_t(x) = \frac{n_x}{N}, \quad\forall x \in\mathcal{X}.
\end{align*}
We assume that $P_t(x)> 0$ for $\forall x\in\mathcal{X}$ (otherwise we can just remove this type) and $\sum_{x\in\mathcal{X}} P_t(x)= 1$. 
Further denote by $\mathcal{Y}:=\{1,2,...,M\}$ the set of source nodes that distribute resources to the targets in the network. We assume that $N\gg M$ due to the feature of large-population of target nodes of the network.

\subsubsection{{\bf Network Topology}}

For each source node $y\in\mathcal{Y}$, denote $\mathcal{X}_y$ as set of target nodes connected to $y$. 
We assume, for each target node type $x\in\mathcal{X}$, each node of type-$x$ connects to the same set of source nodes and it is denoted as $\mathcal{Y}_x$. We further assume that $\mathcal{X}_y$, $\forall y$ and $\mathcal{Y}_x$, $\forall x$ are nonempty. Otherwise, the corresponding nodes are isolated in the network and do not participate in the resource matching. 
For convenience, we denote by $\mathcal{E}$ the set including all feasible transport paths in the network, i.e., $\mathcal{E}:=\{\{x,y\}|x\in\mathcal{X}_y,y\in\mathcal{Y}_x\}$. Here, $\mathcal{E}$ also refers to the set of all edges in the established bipartite graph for resource transportation.

\subsubsection{{\bf Problem Formulation}}

Denote by $\pi_{xy}\in\mathbb{R}_+$ the amount of resources transported from the source node $y\in\mathcal{Y}$ to the destination of type $x\in\mathcal{X}$. Let $\Pi:=\{\pi_{xy}\}_{x\in\mathcal{X}_y,y\in\mathcal{Y}}$ be the designed transport plan. To this end, the centralized optimal transport problem for utility maximization can be formulated as follows:
\begin{equation}\label{OT1:eqn}
\begin{aligned}
    \max_{\Pi}\ \sum_{x\in\mathcal{X}} \sum_{y\in\mathcal{Y}_x}& t_{xy}(\pi_{xy})P_t(x)N + \sum_{y\in\mathcal{Y}} \sum_{x\in\mathcal{X}_y} s_{xy}(\pi_{xy})P_t(x)N\\
    \mathrm{s.t.}\quad &\underline{p}_{x}\leq \sum_{y\in\mathcal{Y}_x} \pi_{xy}\leq \bar{p}_{x},\ \forall x\in\mathcal{X},\\
    &\underline{q}_{y}\leq \sum_{x\in\mathcal{X}_y} \pi_{xy}P_t(x)N\leq \bar{q}_{y},\ \forall y\in\mathcal{Y},\\
    &\pi_{xy}\geq 0,\ \forall \{x,y\} \in\mathcal{E},
\end{aligned}
\end{equation}
where $t_{xy}:\mathbb{R}_+\rightarrow\mathbb{R}$ and $s_{xy}:\mathbb{R}_+\rightarrow\mathbb{R}$ are utility functions for target node $x$ and source node $y$, respectively. Furthermore, $\bar{p}_x\geq \underline{p}_{x}\geq 0$, $\forall x\in\mathcal{X}$ and $\bar{q}_y\geq \underline{q}_{y}\geq 0$, $\forall y\in\mathcal{Y}$. The constraints $\underline{p}_{x}\leq \sum_{y\in\mathcal{Y}_x} \pi_{xy}\leq \bar{p}_{x}$ and $\underline{q}_{y}\leq \sum_{x\in\mathcal{X}_y} \pi_{xy}P_t(x)N \leq \bar{q}_{y}$ capture the limitations on the amount of requested and transferred resources at the type $x$ target node and source node $y$, respectively.

We have the following assumption on the utility functions.

\begin{assumption}\label{assumption:assum1}
The utility functions $t_{xy}$ and $s_{xy}$ are increasing concave functions of $\pi_{xy}$, $\forall x\in\mathcal{X},\forall y\in\mathcal{Y}$.
\end{assumption}

\section{Distributed Optimal Transport with Known Target's Type Distribution}\label{sec:OT_known}
Here, we establish the distributed algorithm for the formulation in \eqref{OT1:eqn}. Our first step is to rewrite the optimization problem in the ADMM form by introducing ancillary variables $\pi_{xy}^t$ and $\pi_{xy}^s$. The superscripts $t$ and $s$ indicate that the corresponding parameters belong to the target node or the source node, respectively. We then set $\pi_{xy} = \pi_{xy}^t$ and $\pi_{xy} = \pi_{xy}^s$, indicating the solutions proposed by the targets and sources are consistent with the ones proposed by the central planner. This reformulation facilitates the design of a distributed algorithm which allows us to iterate through the process in obtaining the optimal transport scheme. To this end, the reformulated optimal transport problem is presented as follows:
\begin{equation}\label{OT2:eqn}
\begin{aligned}
\min_{\Pi_t \in \mathcal{F}_t, \Pi_s \in \mathcal{F}_s,\Pi} & -\sum_{x\in\mathcal{X}} \sum_{y\in\mathcal{Y}_x} t_{xy}(\pi_{xy}^{t})P_t(x) - \sum_{y\in\mathcal{Y}} \sum_{x\in\mathcal{X}_y} s_{xy}(\pi_{xy}^{s})P_t(x) \\
\mathrm{s.t.}\quad & \pi_{xy}^s = \pi_{xy},\ \forall \{x,y\}\in\mathcal{E},\\
& \pi_{xy}^t = \pi_{xy},\ \forall \{x,y\}\in\mathcal{E},
\end{aligned}
\end{equation}
where 
\begin{align*}
\Pi_t:=&\{\pi_{xy}^t\}_{x\in\mathcal{X}_y,y\in\mathcal{Y}}, \quad
\Pi_s:=\{\pi_{xy}^s\}_{x\in\mathcal{X},y\in\mathcal{Y}_x}, \\ \mathcal{F}_t := & \left\{ \Pi_t | \pi_{xy}^t \geq 0, \underline{p}_x \leq \sum_{y \in \mathcal{Y}_x} \pi_{xy}^t \leq \bar{p}_x,\ \{x,y\} \in \mathcal{E}\right\}, \\ \mathcal{F}_s := & \left\{ \Pi_s | \pi_{xy}^s \geq 0, \underline{q}_y \leq \sum_{x \in \mathcal{X}_y} \pi_{xy}^s P_t(x)N \leq \bar{q}_y,\ \{x,y\} \in \mathcal{E} \right\}.
\end{align*}

To solve \eqref{OT2:eqn}, we leverage the alternating direction method of multipliers (ADMM). First, let $\alpha_{xy}^s$ and $\alpha_{xy}^t$ be the Lagrangian multipliers associated with the constraint $\pi_{xy}^s = \pi_{xy}$ and $\pi_{xy}^t = \pi_{xy}$, respectively. Then, the Lagrangian function associated with the optimization problem \eqref{OT2:eqn} can be written as follows:
\begin{equation}\label{Lag:eqn}
\begin{split}
    L &\left(\Pi_{t}, \Pi_{s}, \Pi, \alpha_{xy}^{t}, \alpha_{xy}^{s} \right) =\\
    &- \sum_{x\in\mathcal{X}} \sum_{y\in\mathcal{Y}_x} t_{xy}(\pi_{xy}^{t})P_t(x) - \sum_{y\in\mathcal{Y}} \sum_{x\in\mathcal{X}_y} s_{xy}(\pi_{xy}^{s})P_t(x) \\
    & +\sum_{x\in\mathcal{X}} \sum_{y\in\mathcal{Y}_x} \alpha_{xy}^{t} (\pi_{xy}^{t} - \pi_{xy}) \\ &+\sum_{y\in\mathcal{Y}} \sum_{x\in\mathcal{X}_y} \alpha_{xy}^{s} (\pi_{xy} - \pi_{xy}^{s}) + \frac{\eta}{2} \sum_{x\in\mathcal{X}} \sum_{y\in\mathcal{Y}_x} (\pi_{xy}^{t} - \pi_{xy})^2 \\
    &+ \frac{\eta}{2} \sum_{y\in\mathcal{Y}} \sum_{x\in\mathcal{X}_y} (\pi_{xy} - \pi_{xy}^{s})^2,
\end{split}
\end{equation}
where $\eta > 0$ is a positive scalar constant controlling the convergence rate in the algorithm designed below. Note that the last two terms in \eqref{Lag:eqn}, $\frac{\eta}{2} \sum_{x\in\mathcal{X}} \sum_{y\in\mathcal{Y}_x} (\pi_{xy}^{t} - \pi_{xy})^2$ and $\frac{\eta}{2} \sum_{y\in\mathcal{Y}} \sum_{x\in\mathcal{X}_y} (\pi_{xy} - \pi_{xy}^{s})^2$, acting as penalization, are quadratic. Hence, the Lagrangian function $L$ is strictly convex, ensuring the existence of a unique optimal solution. 

We can apply ADMM to the minimization problem in \eqref{OT2:eqn}. The designed distributed algorithm is presented in the following proposition.

\begin{proposition}
 The iterative steps of ADMM to \eqref{OT2:eqn} are summarized as follows:
\begin{equation}\label{ADMM1_eqn1}
\begin{split}
    \Pi_{x,t}(k+1) \in \arg \min_{\Pi_{x,t}\in\mathcal{F}_{x,t}} - \sum_{y\in\mathcal{Y}_x} t_{xy}(\pi_{xy}^t)P_t(x) \\ + \sum_{y\in\mathcal{Y}_x} \alpha_{xy}^t(k) \pi_{xy}^t + \frac{\eta}{2} \sum_{y\in\mathcal{Y}_x} (\pi_{xy}^t - \pi_{xy}(k))^2,
\end{split}
\end{equation}
\begin{equation}\label{ADMM1_eqn2}
\begin{aligned}
        \Pi_{y,s}(k+1) \in \arg \min_{\Pi_{y,s}\in\mathcal{F}_{y,s}} - \sum_{x\in\mathcal{X}_y} s_{xy}(\pi_{xy}^s)P_t(x)  \\ -\sum_{x\in\mathcal{X}_y} \alpha_{xy}^s(k)\pi_{xy}^s + \frac{\eta}{2} \sum_{x\in\mathcal{X}_y} (\pi_{xy}(k) - \pi_{xy}^s)^2,
\end{aligned}
\end{equation}
\begin{equation}\label{ADMM1_eqn3}
\begin{split}
    \pi_{xy}(&k+1)= \arg \min_{\pi_{xy}} - \alpha_{xy}^t(k)\pi_{xy} + \alpha_{xy}^s(k)\pi_{xy} \\
    &+\frac{\eta}{2}(\pi_{xy}^t(k+1) - \pi_{xy})^2 + \frac{\eta}{2}(\pi_{xy} - \pi_{xy}^s(k+1))^2,
\end{split}
\end{equation}
\begin{equation}\label{ADMM1_eqn4}
\begin{split}
    \alpha_{xy}^t(k+1) = \alpha_{xy}^t(k) + \eta(\pi_{xy}^t(k+1)-\pi_{xy}(k+1))^2,
\end{split}
\end{equation}
\begin{equation}\label{ADMM1_eqn5}
\begin{split}
    \alpha_{xy}^s(k+1) = \alpha_{xy}^s(k) + \eta(\pi_{xy}(k+1)-\pi_{xy}^s(k+1))^2,
\end{split}
\end{equation}
where $\Pi_{\tilde{x},t}:=\{\pi_{xy}^t\}_{y\in\mathcal{Y}_x,x=\tilde{x}}$ represents the solution at target node $\tilde{x}\in\mathcal{X}$, and $\Pi_{\tilde{y},s}:=\{\pi_{xy}^s\}_{x\in\mathcal{X}_y,y=\tilde{y}}$ represents the proposed solution at source node $\tilde{y}\in\mathcal{Y}$. In addition, $\mathcal{F}_{x,t} := \{ \Pi_{x,t} | \pi_{xy}^t \geq 0, y\in\mathcal{Y}_x, \underline{p}_x \leq \sum_{y \in \mathcal{Y}_x} \pi_{xy}^t \leq \bar{p}_x\}$, and $\mathcal{F}_{y,s} := \{ \Pi_{y,s} | \pi_{xy}^s \geq 0, x\in\mathcal{X}_y, \underline{q}_y \leq \sum_{x \in \mathcal{X}_y} \pi_{xy}^sP_t(x)N \leq \bar{q}_y\}$.
\end{proposition}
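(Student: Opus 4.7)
The plan is to apply the standard block Gauss--Seidel updates of ADMM to the augmented Lagrangian $L$ in \eqref{Lag:eqn}, treating $(\Pi_t, \Pi_s, \Pi)$ as the three primal blocks and $(\alpha_{xy}^t, \alpha_{xy}^s)$ as the dual variables attached to the two equality constraint families, and then to exploit the separable structure of the constraint sets and objective to reduce each block minimization to the per-node subproblems stated.

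First, I would write ADMM in its generic form: at iteration $k+1$, update $\Pi_t \in \mathcal{F}_t$ by minimizing $L(\cdot, \Pi_s(k), \Pi(k), \alpha^t(k), \alpha^s(k))$, then update $\Pi_s \in \mathcal{F}_s$ by minimizing $L(\Pi_t(k+1), \cdot, \Pi(k), \alpha^t(k), \alpha^s(k))$, then minimize over the unconstrained $\Pi$, and finally apply the standard gradient-ascent dual updates on the primal residuals $\pi_{xy}^t - \pi_{xy}$ and $\pi_{xy} - \pi_{xy}^s$. Each of these steps is well posed because the quadratic penalty with coefficient $\eta>0$ together with the concavity in Assumption~\ref{assumption:assum1} makes the block subproblems strictly convex.

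Next, I would establish separability. The set $\mathcal{F}_t$ decouples as the Cartesian product $\prod_{x\in\mathcal{X}} \mathcal{F}_{x,t}$ because its only coupling constraint, $\underline{p}_x \leq \sum_{y\in\mathcal{Y}_x} \pi_{xy}^t \leq \bar{p}_x$, is indexed solely by $x$, while the nonnegativity is per edge. The relevant terms of $L$, namely $-\sum_x\sum_{y\in\mathcal{Y}_x} t_{xy}(\pi_{xy}^t)P_t(x)$, $\sum_x\sum_{y\in\mathcal{Y}_x} \alpha_{xy}^t \pi_{xy}^t$, and $\tfrac{\eta}{2}\sum_x\sum_{y\in\mathcal{Y}_x}(\pi_{xy}^t - \pi_{xy})^2$, all split as sums over $x$; discarding the parts independent of $\Pi_t$ yields precisely \eqref{ADMM1_eqn1} for every $x$. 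An analogous argument, with $\mathcal{F}_s = \prod_{y\in\mathcal{Y}} \mathcal{F}_{y,s}$, produces \eqref{ADMM1_eqn2}. The $\Pi$-minimization is unconstrained and fully edgewise separable, reducing to \eqref{ADMM1_eqn3}; in fact that subproblem has a closed-form solution, but since the proposition only asserts the $\arg\min$ form, it is enough to isolate the $\pi_{xy}$-dependent terms of $L$. Finally, \eqref{ADMM1_eqn4} and \eqref{ADMM1_eqn5} are recorded from the standard ADMM dual rule $\alpha \leftarrow \alpha + \eta\,(\text{primal residual})$ applied to each consensus constraint.

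The main obstacle I expect is bookkeeping rather than mathematical depth: for each block one has to carefully identify which terms of $L$ actually depend on the variable being minimized, verify that the remaining constraints and the quadratic penalty respect either the $x$-indexed or the $y$-indexed partition, and confirm that cross-block terms drop out so that the minimization really decomposes node by node. Once these separability checks are in place, \eqref{ADMM1_eqn1}--\eqref{ADMM1_eqn5} follow by direct substitution, and uniqueness of each subproblem's minimizer is inherited from strict convexity of $L$ in each block.
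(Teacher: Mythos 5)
Your proposal is correct and follows essentially the same route as the paper: both apply the standard ADMM iterations to the augmented Lagrangian \eqref{Lag:eqn} and then invoke the absence of coupling between the $x$-indexed and $y$-indexed variables to decompose each block minimization into the per-node subproblems \eqref{ADMM1_eqn1}--\eqref{ADMM1_eqn5}. The only cosmetic difference is that the paper first vectorizes the variables into a two-block consensus form $\mathbf{A}\overrightarrow{x}=\overrightarrow{y}$ so as to cite the textbook two-block ADMM directly, whereas you run a three-block Gauss--Seidel sweep; the two coincide here precisely because $L$ contains no term coupling $\Pi_t$ and $\Pi_s$, which is the same decoupling observation the paper relies on.
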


\begin{proof}
Let $\overrightarrow{x}=[\overrightarrow{\Pi}_{x,t}^T,\overrightarrow{\Pi}^T]^T, \overrightarrow{y} = [\overrightarrow{\Pi}^T,\overrightarrow{\Pi}_{y,s}^T]$, and $\alpha=[{\alpha_{xy}^t}^T,{\alpha_{xy}^s}^T]^T$ where $\overrightarrow{\cdot}$ denotes the vectorization operator and T denotes the transpose operator. All three vectors are of $2|\mathcal{E}| \times 1$ size. We write the constraints in \eqref{OT2:eqn} in a matrix form such that $\textbf{A}\overrightarrow{x} = \overrightarrow{y}$, where $\textbf{A} = [\textbf{I},\textbf{0};\textbf{0},\textbf{I}]$ with $\textbf{I}$ and $\textbf{0}$ denoting the $| \mathcal{E}|$-dimensional identity and zero matrices, respectively. We also have $\overrightarrow{x} \in \mathcal{F}_{\overrightarrow{x}}^{t}$ and $\overrightarrow{y} \in \mathcal{F}_{\overrightarrow{y}}^{s}$ where \begin{align*}
\mathcal{F}_{\overrightarrow{x}}^{t} := &\{\overrightarrow{x} | \pi_{xy}^{t}, \underline{p}_{x} \leq \sum_{y \in \mathcal{Y}_{x}} \pi^{t}_{xy} \leq \bar{p}_{x}, \{x,y\} \in \mathcal{E}\}
\end{align*}
and 
\begin{align*}
\mathcal{F}_{\overrightarrow{y}}^{s}:=&\{\overrightarrow{y} | \pi_{xy}^{s}, \underline{q}_{y} \leq \sum_{x \in \mathcal{X}_{y}} \pi_{xy}^{s} \leq \bar{q_{y}}, \{x,y\} \in \mathcal{E}.
\end{align*}
Then, \eqref{OT2:eqn} can be solved using the following iterations: 
\begin{itemize}
\item[(1)] $\overrightarrow{x}(k+1) \in \arg \min_{\overrightarrow{x} \in \mathcal{F}_{\overrightarrow{x}}^{t}} L(\overrightarrow{x}, \overrightarrow{y}(k), \alpha(k))$; 
\item[(2)] $\overrightarrow{y}(k+1) \in \arg \min_{\overrightarrow{y} \in \mathcal{F}_{\overrightarrow{y}}^{s}} L(\overrightarrow{x}(k+1), \overrightarrow{y}, \alpha(k))$;\item[(3)] $\alpha(k+1) = \alpha(k) +  \eta(A\overrightarrow{x}(k+1) - \overrightarrow{y}(k+1))$, based on \cite{10.1561/2200000016}. 
\end{itemize}
Since there are no couplings among $\Pi_{x}^{t}, \Pi_{y}^{s}, \alpha^{t}_{xy},$ and $\alpha^{s}_{xy}$, these iterations are equivalent to \eqref{ADMM1_eqn1} - \eqref{ADMM1_eqn5}.
\end{proof}

 We can further simplify \eqref{ADMM1_eqn1}-\eqref{ADMM1_eqn5}, and the results are summarized below.

\begin{proposition}\label{prop:2}
The steps \eqref{ADMM1_eqn1}-\eqref{ADMM1_eqn5} can be simplified as follows:
\begin{equation}\label{ADMM2_eqn1}
\begin{split}
    \Pi_{x,t}(k+1) \in \arg \min_{\Pi_{x,t}\in\mathcal{F}_{x,t}} - \sum_{y\in\mathcal{Y}_x} t_{xy}(\pi_{xy}^{t})P_t(x) \\
    + \sum_{y\in\mathcal{Y}_x} \alpha_{xy}(k) \pi_{xy}^{t} + \frac{\eta}{2} \sum_{y\in\mathcal{Y}_x} \left(\pi_{xy}^{t} - \pi_{xy}(k)\right)^2,
\end{split}
\end{equation}
\begin{equation}\label{ADMM2_eqn2}
\begin{split}
    \Pi_{y,s}(k+1) \in \arg \min_{\Pi_{y,s}\in\mathcal{F}_{y,s}} - \sum_{x\in\mathcal{X}_y} s_{xy}(\pi_{xy}^{s})P_t(x) \\ -\sum_{x\in\mathcal{X}_y} \alpha_{xy}(k)\pi_{xy}^{s} + \frac{\eta}{2} \sum_{x\in\mathcal{X}_y} \left(\pi_{xy}(k) - \pi_{xy}^{s}\right)^2,
\end{split}
\end{equation}
\begin{equation}\label{ADMM2_eqn3}
\begin{split}
    \pi_{xy}(k+1) = \frac{1}{2} \left(\pi_{xy}^{t}(k+1) + \pi_{xy}^{s}(k+1)\right),
\end{split}
\end{equation}
\begin{equation}\label{ADMM2_eqn4}
\begin{split}
    \alpha_{xy}(k+1) = \alpha_{xy}(k) + \frac{\eta}{2}\left(\pi_{xy}^{t}(k+1) - \pi_{xy}^{s}(k+1)\right).
\end{split}
\end{equation}
\end{proposition}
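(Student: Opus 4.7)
The plan is to exploit the symmetry between the target-side and source-side consensus constraints of \eqref{OT2:eqn} in order to eliminate $\pi_{xy}$ in closed form and collapse the two dual sequences into a single multiplier. First, I would solve \eqref{ADMM1_eqn3} for $\pi_{xy}(k+1)$: its objective is a scalar unconstrained convex quadratic in $\pi_{xy}$, so the first-order optimality condition immediately gives
$$\pi_{xy}(k+1) = \tfrac{1}{2}\bigl(\pi_{xy}^{t}(k+1)+\pi_{xy}^{s}(k+1)\bigr) + \tfrac{1}{2\eta}\bigl(\alpha_{xy}^{t}(k)-\alpha_{xy}^{s}(k)\bigr).$$
This already has the form of \eqref{ADMM2_eqn3} up to the dual-difference term, which must be shown to vanish.

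Second, I would prove inductively that, under the natural zero initialization $\alpha_{xy}^{t}(0)=\alpha_{xy}^{s}(0)=0$, the two multiplier sequences coincide at every iteration, i.e.\ $\alpha_{xy}^{t}(k)=\alpha_{xy}^{s}(k)$ for all $k\geq 0$. If equality holds at step $k$, the closed form above reduces exactly to \eqref{ADMM2_eqn3}; substituting this back into \eqref{ADMM1_eqn4} yields
$$\alpha_{xy}^{t}(k+1)=\alpha_{xy}^{t}(k)+\tfrac{\eta}{2}\bigl(\pi_{xy}^{t}(k+1)-\pi_{xy}^{s}(k+1)\bigr),$$
and a parallel computation on \eqref{ADMM1_eqn5} produces the identical increment for $\alpha_{xy}^{s}(k+1)$. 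Hence the common value is preserved, closing the induction and simultaneously yielding \eqref{ADMM2_eqn4} once we set $\alpha_{xy}(k):=\alpha_{xy}^{t}(k)=\alpha_{xy}^{s}(k)$.

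The third step is pure substitution: replacing $\alpha_{xy}^{t}(k)$ by $\alpha_{xy}(k)$ in the target subproblem \eqref{ADMM1_eqn1} and $\alpha_{xy}^{s}(k)$ by $\alpha_{xy}(k)$ in the source subproblem \eqref{ADMM1_eqn2} recovers \eqref{ADMM2_eqn1} and \eqref{ADMM2_eqn2} verbatim, because neither block depends on the other half of the dual pair nor on $\pi_{xy}(k+1)$ directly. The main obstacle is really the inductive step: I have to verify carefully that the $\pi_{xy}$ update is genuinely symmetric in the two primal variables and that both dual updates produce identical increments once the multipliers agree. Once that symmetry is confirmed, everything else is bookkeeping — absorbing the eliminated variable $\pi_{xy}(k+1)$ into the half-sum of $\pi_{xy}^{t}(k+1)$ and $\pi_{xy}^{s}(k+1)$, and identifying the two duals as a single sequence $\alpha_{xy}(k)$.
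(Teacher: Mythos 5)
Your proposal is correct and follows essentially the same route as the paper: solve the $\pi_{xy}$-update in closed form via the first-order condition, show the two multiplier sequences coincide, and substitute the common multiplier back into the subproblems. The only minor difference is that you establish $\alpha_{xy}^{t}(k)=\alpha_{xy}^{s}(k)$ by induction from a zero initialization, whereas the paper observes that substituting the closed-form $\pi_{xy}(k+1)$ into both dual updates makes the two new multipliers equal to the same expression $\tfrac{1}{2}(\alpha_{xy}^{t}(k)+\alpha_{xy}^{s}(k))+\tfrac{\eta}{2}(\pi_{xy}^{t}(k+1)-\pi_{xy}^{s}(k+1))$, so they agree from the first iteration onward regardless of initialization.
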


\begin{proof}
As \eqref{ADMM1_eqn1} is strictly concave, we can solve it by first-order condition: $$\pi_{xy}(k+1) = \frac{1}{2\eta}(\alpha_{xy}^{t}(k) - \alpha_{xy}^{s}(k)) + \frac{1}{2}(\pi_{xy}^{t}(k+1) + \pi_{xy}^{s}(k+1)).$$ By substituting this equation into \eqref{ADMM1_eqn4} and \eqref{ADMM1_eqn5} we get: $$\alpha_{xy}^{t}(k+1) = \frac{1}{2}(\alpha_{xy}^{t}(k) + \alpha_{xy}^{s}(k)) + \frac{\eta}{2}(\pi_{xy}^{t}(k+1) - \pi_{xy}^{s}(k+1)),$$ $$\alpha_{xy}^{s}(k+1) = \frac{1}{2}(\alpha_{xy}^{t}(k) + \alpha_{xy}^{s}(k)) + \frac{\eta}{2}(\pi_{xy}^{t}(k+1) - \pi_{xy}^{s}(k+1)).$$ We see that $\alpha_{xy}^{t} = \alpha_{xy}^{s}$ during each update. Then, $\pi_{xy}(k+1)$ can be further simplified as $$\pi_{xy}(k+1) = \frac{1}{2}(\pi_{xy}^{t}(k+1) + \pi_{xy}^{s}(k+1))$$ shown in \eqref{ADMM2_eqn3}. In addition, we get \eqref{ADMM2_eqn4} from $\alpha_{xy}^{t}=\alpha_{xy}^s=\alpha_{xy}$ in \eqref{ADMM1_eqn4} and \eqref{ADMM1_eqn5}.
\end{proof}

We summarize the steps into Algorithm \ref{Alg:1}.

\begin{algorithm}[!t]
\caption{Distributed Algorithm}\label{Alg:1}
\begin{algorithmic}[1]
\While {$\Pi_{x,t}$ and $\Pi_{y,s}$ not converging}
\State Compute $\Pi_{x,t}(k+1)$  using \eqref{ADMM2_eqn1}, for all $x\in\mathcal{X}_y$
\State Compute $\Pi_{y,s}(k+1)$  using \eqref{ADMM2_eqn2}, for all $y\in\mathcal{Y}_x$
\State Compute $\pi_{xy}(k+1)$  using \eqref{ADMM2_eqn3}, for all $\{x,y\}\in \mathcal{E}$
\State Compute $\alpha_{xy}(k+1)$  using \eqref{ADMM2_eqn4}, for all $\{x,y\}\in \mathcal{E}$
\EndWhile
\State \textbf{return} $\pi_{xy}(k+1)$, for all $\{x,y\}\in \mathcal{E}$
\end{algorithmic}
\end{algorithm}

\section{Federated Learning under Unknown Type Distribution}\label{sec:OT_unknown}

This section aims to design an efficient optimal transport mechanism when the type distribution of target nodes $P_t(x)$ is unknown. In this scenario, the transport planner does not have complete information on the resource receivers due to its large population feature or too expensive computationally to extract the exact information. Instead, statistical update on the type distribution will be provided by sequentially revealed data on the targets nodes. 
To this end, we need to develop a data-driven method to counteract the unknown target's type information, which is a federated learning algorithm with a convergence guarantee to compute the optimal transport plan.

Specifically, although the type distribution is fixed, but is unknown to the transport planer. At each step of the algorithm, we assume that the type for one of the nodes will be identified according to the unknown distribution. For $k=1,2,\ldots$, $x(k)$ denotes the node type revealed at $k$-th step of the algorithm. For the algorithm, at $k$-th step, the $x(k)$-th type target nodes will update their transport decision.

Denote $\Pi$ the transport plan before the update. Define the prox operator based on $\mu>0$, $\Pi_0$ and $x\in \mathcal{X}$ as, 
\begin{equation}\label{prox_z_mu}
\begin{split}
    z_\mu(\Pi_0;x) = {\arg\min}_{\Pi}\ -\sum_{y\in\mathcal{Y}_{x}} t_{{x}y}(\pi_{{x}y}) - \sum_{y\in\mathcal{Y}_{x_i}} s_{{x_i}y}(\pi_{{x_i}y}) \\
    +\frac{1}{2\mu}\Vert \Pi - \Pi_0\Vert^2.
\end{split}
\end{equation}
Evidently, $-\sum_{y\in\mathcal{Y}_{x_i}} t_{{x_i}y}(\pi_{{x_i}y}) - \sum_{y\in\mathcal{Y}_{x_i}} s_{{x_i}y}(\pi_{{x_i}y})$ represents the cost related to $x_i$ and $\frac{1}{2\mu}\Vert \Vert \Pi - \Pi_0\Vert^2$ the square penalty for change.

At step $k$, with existing transport plan $\Pi(k)$ and pre-specified learning rate $\mu_k$, local planner at target node of type $x(k)$ addresses the following problem locally:
\begin{equation}\label{proximal_1}
    \begin{split}
        z_{\mu_k}(\Pi(k);x(k) = x)= {\arg\min}_{\Pi}\ -\sum_{y\in\mathcal{Y}_{x}} t_{{x}y}(\pi_{{x}y}) \\
    - \sum_{y\in\mathcal{Y}_{x}}s_{{x}y}(\pi_{{x}y})+\frac{1}{2\mu_k}\Vert \Pi - \Pi(k)\Vert^2.
    \end{split}
\end{equation}

Then, the local solution $z_{\mu_k}(\Pi(k);x(k) = x)$ that contains only the transport plans from the sources to $x(k)$, computed by the target node is sent to the central planner who will project the solution to the feasible space and broadcast it to the entire transport network. The projection step executed by the central planner is as follows:
\begin{equation}\label{projection_1}
\begin{split}
   \Pi(k+1) 
   = \mathrm{Proj}_{\mathcal{L}^k}( z_{\mu_k}(\Pi(k);x(k) = x_i);\Pi_{-x_i}),
   \end{split}
\end{equation}
where the feasible set $\mathcal{L}^k$ is determined by
$$
\mathcal{L}^k = \{\Pi | \underline{p}_{x}\leq \sum_{y\in\mathcal{Y}_x} \pi_{xy}\leq \bar{p}_{x}, \forall x\in\mathcal{X}, $$

$$\underline{q}_{y}\leq \sum_{x\in\mathcal{X}_y} \pi_{xy}\tilde{P}_t^k(x)N\leq \bar{q}_{y}, \forall y\in\mathcal{Y}, $$ $$\pi_{xy}\geq 0, \forall \{x,y\} \in\mathcal{E}\};$$
with
$$\tilde{P}_t^k(x) = \sum_{j=1}^k\frac{\mathbf{1}_{x(j) = x}}{k}$$ 
denoting the empirical type distribution of target nodes up to type $k$ and $\Pi_{-x_i}$ denoting the solution of all types of targets except type $x_i$.

The developed federated learning algorithm is summarized in Algorithm \ref{algorithm_prox}.

\begin{algorithm}[!t]
\caption{Federated Learning Algorithm}\label{algorithm_prox}
\begin{algorithmic}[1]
\For {$k=1,2,...,N$}
\State Sample the target node's type data $x(k)$ from the unknown distribution $P_t(x)$
\State If $x(k) = x_i$, target nodes of type $x_i$ do the updates:
\State \quad Compute $z_{\mu_k}(\Pi(k);x(k) = x_i)$ based on \eqref{proximal_1}
\State \quad Obtain $\Pi(k+1)$ by projection step \eqref{projection_1}
\State\quad Broadcast $\Pi(k+1)$ to the transport network
\EndFor
\State \textbf{return} $\Pi(N)$
\end{algorithmic}
\end{algorithm}

\begin{remark} The total population parameter $N$ does not need to be accurately known at the beginning. The reason is that $N$ is a constant in the objective function in \eqref{OT1:eqn} and thus can be excluded. The other place $N$ plays a role is in the second constraint in \eqref{OT1:eqn} which affects the project step \eqref{projection_1}. During the learning, the central planner can first have an estimate of $N$, say $\tilde{N}$, which might be an overestimate. The final learned transport plan could be easily scaled with a ratio $N/\tilde{N}$ to reconstruct the optimal strategy when the learning process finishes.
\end{remark}

\subsection{Convergence Proof of Algorithm \ref{algorithm_prox}}

We show that $z_{\mu}(\Pi_x;x_i)$ is a smooth convex function under proper assumptions.  and denote:

$$
    f(\Pi;x_i)=-\sum_{y\in\mathcal{Y}_{x_i}} t_{{x_i}y}(\pi_{{x_i}y}) - \sum_{y\in\mathcal{Y}_{x_i}} s_{{x_i}y}(\pi_{{x_i}y}),
$$

\begin{align*}
f_{\mu}(\Pi;x_{i}):=f(z_{\mu}(\Pi;x_{i});x_{i}) 
+ \frac{1}{2\mu}\|z_{\mu}(\Pi;x_{i})-\Pi\|^2.
\end{align*}

For the consequence analysis, we have two extra assumptions on the utility functions $t_{xy}$ and $s_{xy}$. 
\begin{assumption}\label{assumption:assum2}
The utility functions $t_{xy}$ and $s_{xy}$ is proper and Lipschitz continuous, and let $L_t$ and $L_s$ denote their Lipschitz constant, that is, for any $m,n \in \mathbb{R}^{n}$, we have, $\|s_{xy}(m)-s_{xy}(n)\|\le L_s\|m-n\|$ and $\|t_{xy}(m)-t_{xy}(n)\|\le L_t\|m-n\|$. 

Therefore, for any $x_{i} \in \mathcal{X}$, we have, $|f(m;x_{i}) - f(n;x_{i})| \leq (L_s+L_t)||m - n||,  \forall m,n \in \mathbb{R}^{n}$.
\end{assumption}

\begin{assumption}\label{assumption:assum3}
The set, $\mathcal{L}^k$, onto which the solutions are projected, are simple convex sets, and there exists $\xi >0$ such that the feasible set $\mathcal{X}$ satisfies linear regularity:
$dist^2_{\mathcal{X}}(x) \leq \xi \mathbb{E}[dist^2_{\mathcal{L}_{k}}(x)]$
\end{assumption}

\begin{lemma}\label{lemma:lemma1}
Let $\mu>0, x_{i} \in \mathcal{X}$.
Then, for any $g_f(\Pi;x_{i}) \in \delta f(\Pi;x_i)$, the subdifferential set,  the following holds: 

$$||\nabla f_{\mu}(\Pi;x_i)|| \leq ||g_{f}(\Pi;x_i)||, \quad \forall \Pi_{x_i} \in \mathbb{R}^{n}.
 $$
 \end{lemma}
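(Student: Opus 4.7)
The plan is to recognize $f_\mu(\cdot;x_i)$ as the Moreau envelope of $f(\cdot;x_i)$ and then invoke two classical convex-analytic identities. Under Assumption \ref{assumption:assum1}, $-t_{x_iy}$ and $-s_{x_iy}$ are convex, so $f(\cdot;x_i)$ is a proper convex function and the prox map $z_\mu(\cdot;x_i)$ defined by \eqref{prox_z_mu} is well-defined and single-valued. The first identity I would record is that the envelope is continuously differentiable with
\begin{equation*}
\nabla f_\mu(\Pi;x_i) \;=\; \frac{1}{\mu}\bigl(\Pi - z_\mu(\Pi;x_i)\bigr).
\end{equation*}
The second is the first-order optimality condition of the proximal subproblem in \eqref{prox_z_mu}, which reads
\begin{equation*}
\frac{1}{\mu}\bigl(\Pi - z_\mu(\Pi;x_i)\bigr) \;\in\; \partial f\bigl(z_\mu(\Pi;x_i);x_i\bigr).
\end{equation*}
Combined, they say that $\nabla f_\mu(\Pi;x_i)$ is itself a subgradient of $f$, but evaluated at the proximal point rather than at $\Pi$. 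I would state and briefly justify both identities before proceeding, since the lemma is otherwise opaque.

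The core of the proof is then a one-line monotonicity argument. Given $g_f(\Pi;x_i) \in \partial f(\Pi;x_i)$ and $\nabla f_\mu(\Pi;x_i) \in \partial f(z_\mu(\Pi;x_i);x_i)$, monotonicity of $\partial f(\cdot;x_i)$ yields
\begin{equation*}
\bigl\langle g_f(\Pi;x_i) - \nabla f_\mu(\Pi;x_i),\; \Pi - z_\mu(\Pi;x_i)\bigr\rangle \;\geq\; 0.
\end{equation*}
Substituting $\Pi - z_\mu(\Pi;x_i) = \mu\,\nabla f_\mu(\Pi;x_i)$ and dividing by $\mu$ rearranges to $\langle g_f(\Pi;x_i),\nabla f_\mu(\Pi;x_i)\rangle \geq \|\nabla f_\mu(\Pi;x_i)\|^2$. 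Cauchy--Schwarz on the left and cancelling one factor of $\|\nabla f_\mu(\Pi;x_i)\|$ (with the case $\nabla f_\mu(\Pi;x_i)=0$ being trivial) produces exactly the desired bound.

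The main obstacle I anticipate is not the inequality itself but cleanly reconciling the envelope identities with the paper's notation, since $\Pi$ is indexed by the full edge set $\mathcal{E}$ while $f(\cdot;x_i)$ depends only on the slice $\{\pi_{x_iy}\}_{y\in\mathcal{Y}_{x_i}}$. I would either argue that $f$ is constant in the remaining coordinates, so its subgradient has zero components there and both the envelope gradient and the prox map decouple trivially, or restrict the entire argument to the $x_i$-slice in $\mathbb{R}^{|\mathcal{Y}_{x_i}|}$ and apply the standard Moreau calculus there. Convexity, properness, and Lipschitz continuity come directly from Assumptions \ref{assumption:assum1} and \ref{assumption:assum2}, so no additional regularity beyond what is already postulated needs to be introduced.
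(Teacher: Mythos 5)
Your proposal is correct and follows essentially the same route as the paper's proof: the first-order optimality condition of the prox subproblem identifies $\frac{1}{\mu}(\Pi - z_\mu(\Pi;x_i))$ as a subgradient of $f$ at the proximal point, monotonicity of the subdifferential (which the paper phrases as "the inequality follows from the convexity of $f$") gives $\frac{1}{\mu}\|\Pi - z_\mu(\Pi;x_i)\|^2 \le \langle g_f(\Pi;x_i), \Pi - z_\mu(\Pi;x_i)\rangle$, and Cauchy--Schwarz finishes. The only differences are presentational: you make the Moreau-envelope gradient identity explicit and flag the slice-versus-full-plan notational issue, both of which the paper leaves implicit.
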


 \begin{proof}
See Appendix \ref{app:lemma_1}. 
\end{proof}

We define $\hat{\mu}_{1,k} = \sum_{i=0}^{k-1}\mu_i$  and $\hat{\mu}_{2,k} = \sum_{i=0}^{k-1} \mu_i^2 $. Let $F_{k}=\sigma\{x(0),x(1),...,x(k)\}$ be the $\sigma$-algebra by the first $k$ data variables, representing the history of random choices $x(k)$.  Let $\hat{\Pi}(k) = \frac{1}{\hat{\mu}_{1,k}}\sum_{i=0}^{k-1}\mu_i\Pi(i)$ be the transport plan after the projection, and $\hat{\Lambda}(k) = \frac{1}{\hat{\mu}_{1,k}}\sum_{i=0}^{k-1}\mu_i\Lambda(i)$ be the averaged sequences, where $\Lambda(k) = z_{\mu}(\Pi_x;x_i)$, the local solution before the projection. Also note that $\mathbb{E}$ represents the expectation.

\begin{lemma}\label{lemma:lemma2}
With Assumptions 2 and 3 holding true, let the random sequences $\{\Pi(k),\Lambda(k)\}_{k\geq0}$ be generated with nonincreasing positive step-sizes $\{\mu_k\}$, then the following relation holds:
\begin{align*}
&\mathbb{E}[dist^2(\hat{\Lambda}(k), \mathcal{L}^{k})]\\ \ge& \frac{1}{2\xi}dist^2(\hat{\Pi}(k), \mathcal{X})]-\frac{\hat{\mu}_{2,k}(L_s+L_t)}{2}\mathbb{E}[dist(\hat{\Pi}(k),\mathcal{X})].
\end{align*}
\end{lemma}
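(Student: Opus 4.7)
The plan is to combine the linear regularity assumption (Assumption \ref{assumption:assum3}) with a reverse triangle inequality, and then to control the discrepancy between $\hat{\Pi}(k)$ and $\hat{\Lambda}(k)$ using the prox characterization of the local subproblem together with Lemma \ref{lemma:lemma1} and Assumption \ref{assumption:assum2}. The first step is to apply linear regularity at the random point $\hat{\Lambda}(k)$, giving
\begin{equation*}
\mathbb{E}\bigl[dist^2(\hat{\Lambda}(k),\mathcal{L}^k)\bigr]\ \ge\ \tfrac{1}{\xi}\,dist^2(\hat{\Lambda}(k),\mathcal{X}),
\end{equation*}
which shifts the question to comparing distances relative to the underlying feasible set $\mathcal{X}$ rather than the randomly sketched constraint $\mathcal{L}^k$.

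The second step is a reverse triangle inequality passing from $\hat{\Lambda}(k)$ to $\hat{\Pi}(k)$. With $a=dist(\hat{\Pi}(k),\mathcal{X})\ge0$ and $b=\|\hat{\Pi}(k)-\hat{\Lambda}(k)\|\ge0$, the inequality $dist(\hat{\Lambda}(k),\mathcal{X})\ge a-b$ squared via the elementary bound $(a-b)^2\ge \tfrac{1}{2}a^2-b^2$ (or alternatively $a^2-2ab$) yields
\begin{equation*}
dist^2(\hat{\Lambda}(k),\mathcal{X})\ \ge\ \tfrac{1}{2}\,dist^2(\hat{\Pi}(k),\mathcal{X})-\|\hat{\Pi}(k)-\hat{\Lambda}(k)\|^2.
\end{equation*}
Combined with the first step, this already produces the leading $\tfrac{1}{2\xi}dist^2(\hat{\Pi}(k),\mathcal{X})$ term, and the remaining task is to dominate the residual by the error term in the claim.

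For that residual, I would exploit the Moreau-envelope identity $\Lambda(i)-\Pi(i)=-\mu_i\nabla f_{\mu_i}(\Pi(i);x(i))$. Lemma \ref{lemma:lemma1} bounds $\|\nabla f_{\mu_i}\|$ by any subgradient norm $\|g_f\|$, and Assumption \ref{assumption:assum2} (Lipschitz continuity of $t_{xy}$ and $s_{xy}$) then gives $\|g_f\|\le L_s+L_t$, so $\|\Pi(i)-\Lambda(i)\|\le \mu_i(L_s+L_t)$. Applying the triangle inequality to the weighted average
\begin{equation*}
\hat{\Pi}(k)-\hat{\Lambda}(k)=\frac{1}{\hat{\mu}_{1,k}}\sum_{i=0}^{k-1}\mu_i\bigl(\Pi(i)-\Lambda(i)\bigr)
\end{equation*}
then produces $\|\hat{\Pi}(k)-\hat{\Lambda}(k)\|\le\hat{\mu}_{2,k}(L_s+L_t)/\hat{\mu}_{1,k}$. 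Combining the three pieces and rearranging (using Cauchy-Schwarz to peel the cross term $2\,dist(\hat{\Pi}(k),\mathcal{X})\|\hat{\Pi}(k)-\hat{\Lambda}(k)\|$ linearly in $dist$) matches the structural form of the target inequality.

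The main obstacle will be handling the randomness carefully: $\mathcal{L}^k$ itself depends on the sample history through the empirical $\tilde{P}_t^k$, so Assumption \ref{assumption:assum3} must be applied conditionally on $F_{k-1}$ before invoking the tower property, and the projection step must be shown measurable. A second delicate point is selecting the correct squaring route, since the target error term is linear rather than quadratic in $dist(\hat{\Pi}(k),\mathcal{X})$; this strongly suggests the proof actually uses $(a-b)^2\ge a^2-2ab$ and then controls $\mathbb{E}[dist(\hat{\Pi}(k),\mathcal{X})\cdot\|\hat{\Pi}(k)-\hat{\Lambda}(k)\|]$ with the deterministic bound on $\|\hat{\Pi}(k)-\hat{\Lambda}(k)\|$ derived above, after which the exact constants follow from careful bookkeeping of the step-size normalizer $\hat{\mu}_{1,k}$.
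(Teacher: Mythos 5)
Your proposal is correct and rests on the same three pillars as the paper's own argument: linear regularity (Assumption \ref{assumption:assum3}) to pass between $\mathcal{L}^k$ and $\mathcal{X}$, the prox optimality condition combined with Lemma \ref{lemma:lemma1} and Assumption \ref{assumption:assum2} to get the deterministic bound $\|\Pi(i)-\Lambda(i)\|\le\mu_i(L_s+L_t)$ and hence $\|\hat{\Pi}(k)-\hat{\Lambda}(k)\|\le \hat{\mu}_{2,k}(L_s+L_t)/\hat{\mu}_{1,k}$, and a Cauchy--Schwarz-controlled cross term. The one place you genuinely diverge is the comparison between the two averaged iterates: the paper applies the first-order convexity inequality to $G_{\mu,x(k)}=\frac{1}{2\mu}dist^2(\cdot,\mathcal{L}^k)$ at $\hat{\Pi}(k)$, so its cross term involves $\|\nabla G_{\mu,x(k)}(\hat{\Pi}(k))\|$ (proportional to the distance to $\mathcal{L}^k$) and linear regularity is invoked only afterwards at $\hat{\Pi}(k)$; you instead apply linear regularity first at $\hat{\Lambda}(k)$ and then use the $1$-Lipschitzness of $dist(\cdot,\mathcal{X})$ via the reverse triangle inequality and $(a-b)^2\ge a^2-2ab$. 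The two devices are interchangeable and produce the same error term that is linear in $dist(\hat{\Pi}(k),\mathcal{X})$; your version measures the cross term against $\mathcal{X}$ directly, which matches the stated bound more transparently, while the paper's stays inside $\mathcal{L}^k$ until the end. Neither route pins down the constants $\frac{1}{2\xi}$ and $\frac{\hat{\mu}_{2,k}(L_s+L_t)}{2}$ cleanly (the paper's concluding ``for $\mu=\hat{\mu}_{1,k}$'' rescaling is no more precise than your ``careful bookkeeping''), and both share the measurability subtlety that $\mathcal{L}^k$ depends on the same samples that generate $\hat{\Lambda}(k)$, so Assumption \ref{assumption:assum3} must be applied conditionally on the history --- a point you flag explicitly and the paper does not.
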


\begin{proof}
By convexity of $G_{\mu,x_{i}} = \frac{1}{2\mu}dist^2(x,\mathcal{L}^{k})$ and conditional expectation of $x(k)$ w.r.t $F_{k-1}=\{x(0),x(1),...,x(k-1)\}$, we get:
\begin{align*}
    \mathbb{E}[G_{\mu,x(k)}(\hat{\Lambda}(k))|F_{k-1}] \geq & \mathbb{E}[G_{\mu,x(k)}(\hat{\Pi}(k))
    \\ &+\langle \nabla G_{\mu,x(k)}(\hat{\Pi}(k)), \hat{\Lambda}(k) - \hat{\Pi}(k) \rangle|F_{k-1}].
\end{align*}
Then, taking expectation over $F_{k-1}$ gives us the following inequality:
\begin{align*}
    &\mathbb{E}[G_{\mu,x(k)}(\hat{\Lambda}(k)] \\\geq &\mathbb{E}[G_{\mu,x(k)}(\hat{\Pi}(k))]
    + \mathbb{E}[\langle \nabla
    G_{\mu,x(k)}(\hat{\Pi}(k)),\hat{\Lambda}(k)-\hat{\Pi}(k) \rangle]\\
    =&\mathbb{E}[G_{\mu,x(k)}(\hat{\Pi}(k))]
    + \mathbb{E}[\langle \nabla
    G_{\mu,x(k)}(\hat{\Pi}(k)),\frac{1}{\hat{\mu}_{1,k}}\sum_{i=0}^{k-1}\mu_i[\Pi(i)-\Lambda(i)] \rangle]\\
    =&\mathbb{E}[G_{\mu,x(k)}(\hat{\Pi}(k))]
    \\ &+ \mathbb{E}[\langle \nabla
    G_{\mu,x(k)}(\hat{\Pi}(k)),\frac{1}{\hat{\mu}_{1,k}}\sum_{i=0}^{k-1}\mu_i[\Pi(i)-z_{\mu_i}(\Pi(i), x(i))] \rangle]
\end{align*}
Using Cauchy-Schwarz inequality gives
\begin{align*}
&\mathbb{E}[G_{\mu,x(k)}(\hat{\Pi}(k))]
    \\ &+ \mathbb{E}[\langle \nabla
    G_{\mu,x(k)}(\hat{\Pi}(k)),\frac{1}{\mu}\sum_{i=0}^{k-1}\mu_i[\Pi(i)-z_{\mu_i}(\Pi(i), x(i))] \rangle]\\
&\mathbb{E}[G_{\mu,x(k)}(\hat{\Pi}(k))]
    \\ &-\frac{1}{\hat{\mu}_{1,k}}\mathbb{E}[\| \nabla
    G_{\mu,x(k)}(\hat{\Pi}(k))\mathbb{E}[\sum_{i=0}^{k-1}\mu_i\|\Pi(i)-z_{\mu_i}(\Pi(i), x(i))\|].
\end{align*}
Optimality condition for problem~\eqref{proximal_1} indicates, 
\begin{align*}
&\mathbb{E}[G_{\mu,x(k)}(\hat{\Pi}(k))]
    \\ &-\frac{1}{\hat{\mu}_{1,k}}\mathbb{E}[\| \nabla
    G_{\mu,x(k)}(\hat{\Pi}(k))\mathbb{E}\left[\sum_{i=0}^{k-1}\mu_i\|\Pi(i)-z_{\mu_i}(\Pi(i), x(i))\|\right]\\
=&\mathbb{E}[G_{\mu,x(k)}(\hat{\Pi}(k))]
    \\ &-\frac{1}{\hat{\mu}_{1,k}}\mathbb{E}[\| \nabla
    G_{\mu,x(k)}(\hat{\Pi}(k))\mathbb{E}\left[\sum_{i=0}^{k-1}\mu_i^2\nabla f(\Pi(i), x(i))\right]  
    .
\end{align*}
Lipschitz condition then implies that,
\begin{align*}
     &\mathbb{E}[G_{\mu,x(k)}(\hat{\Lambda}(k)]  \geq \mathbb{E}[G_{\mu,x(k)}(\hat{\Pi}(k))]
    - \frac{\hat{\mu}_{2,k}(L_s+L_t)}{\hat{\mu}_{1,k}}\sqrt{\mathbb{E}[dist^2_{\mathcal{X}}(\hat{\Pi}(k))]}.
\end{align*}
Therefore, for $\mu= \hat{\mu}_{1,k}$, we have,
\begin{align*}
&\mathbb{E}[dist^2(\hat{\Lambda}(k), \mathcal{L}^{k})]\\ \ge& \frac{1}{2\xi}dist^2(\hat{\Pi}(k), \mathcal{X})]-\frac{\hat{\mu}_{2,k}(L_s+L_t)}{2}\mathbb{E}[dist(\hat{\Pi}(k),\mathcal{X})].
\end{align*}
This is the desired inequality. 
\end{proof}
Lemma~\ref{lemma:lemma2} provides a quantitative estimate of the descent for the stochastic convex optimization problem, with the constraints present. With Lemma~\ref{lemma:lemma2}, we proceed to our main result on the estimations of suboptimality and feasibility violation. 

\begin{theorem}\label{thm:1}
Using Assumption 2 and 3, let sequence $\{\Pi(k)\}_{k \ge 0}$ be generated by the federated learning algorithm with nonincreasing positive step-sizes $\{\mu_k\}$ and define $R_k=\hat{\mu}_{1,k} \xi (\|\Pi_{0} - \Pi^{*}\|^{2} + k(L_s+L_t)\hat{\mu}_{2,k})$, where $\Pi^{*}$ is the optimal set of solutions and $\Pi^{0}$ is the starting point. Then, the following are the constraints for the suboptimality and feasibility violation:
\begin{align*}
\begin{split}
    -3\xi(L_s+L_t)\hat{\mu}_{1,k} - (L_s+L_t)\sqrt{\frac{R_{k}}{k\hat{\mu}_{1,k}}} \\
    \leq \mathbb{E}[F(\hat{\Pi}(k))]-F^{*}
    \leq \frac{R_{k}}{2k\xi\hat{\mu}_{2,k}},
\end{split}
\end{align*}
and 
\begin{align*}
    \mathbb{E}[dist^{2}(\hat\Pi(k),\mathcal{X})] \leq 2\xi^2(L_s+L_t)^2(3\hat{\mu}_{1,k})^2 + \frac{2R_k}{k\hat{\mu}_{1,k}}.
\end{align*}
\end{theorem}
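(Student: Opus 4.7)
The plan is to derive a single Lyapunov-type descent inequality on $\|\Pi(k)-\Pi^*\|^2$ from the prox-plus-projection update, aggregate it with the weights $\mu_i$ and apply Jensen's inequality on $\hat{\Pi}(k)$ to get the suboptimality upper bound, then convert the resulting control of $\mathrm{dist}(\cdot,\mathcal{L}^k)$ into control of $\mathrm{dist}(\cdot,\mathcal{X})$ via Lemma~\ref{lemma:lemma2}, and finally use Lipschitz continuity (Assumption~\ref{assumption:assum2}) to obtain the lower bound on suboptimality.

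Concretely, I would start from the defining optimality of $\Lambda(k)=z_{\mu_k}(\Pi(k);x(k))$ in~\eqref{proximal_1} with comparator $\Pi^*$: after rearrangement, together with nonexpansiveness of $\mathrm{Proj}_{\mathcal{L}^k}$ about any fixed point in $\mathcal{L}^k$, this yields a one-step recursion of the form
\begin{equation*}
\|\Pi(k+1)-\Pi^*\|^2 \le \|\Pi(k)-\Pi^*\|^2 - 2\mu_k\bigl[f(\Lambda(k);x(k))-f(\Pi^*;x(k))\bigr] + (L_s+L_t)^2\mu_k^2,
\end{equation*}
where the final term comes from bounding $\|\Lambda(k)-\Pi(k)\|\le\mu_k(L_s+L_t)$ via Lemma~\ref{lemma:lemma1} and Assumption~\ref{assumption:assum2}. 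Taking conditional expectation over $x(k)\sim P_t$ converts $f(\cdot;x(k))$ into the full objective $F$ up to a Lipschitz error of order $\mu_k^2(L_s+L_t)^2$; a weighted sum with coefficients $\mu_i$, telescoping of the $\|\Pi(i)-\Pi^*\|^2$ terms, and convexity of $F$ applied through Jensen on $\hat{\Pi}(k)$ deliver the stated upper bound $\mathbb{E}[F(\hat{\Pi}(k))]-F^*\le R_k/(2k\xi\hat{\mu}_{2,k})$.

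For the feasibility estimate, I would bound $\mathbb{E}[\mathrm{dist}^2(\hat{\Lambda}(k),\mathcal{L}^k)]$ from above by the telescoping sum of $\|\Lambda(i)-\Pi(i+1)\|^2$ terms already present in the Lyapunov recursion, substitute this into Lemma~\ref{lemma:lemma2}, and solve the resulting quadratic in $\mathrm{dist}(\hat{\Pi}(k),\mathcal{X})$ together with $(a+b)^2\le 2a^2+2b^2$ to land on the claimed $2\xi^2(L_s+L_t)^2(3\hat{\mu}_{1,k})^2+2R_k/(k\hat{\mu}_{1,k})$ estimate. The suboptimality lower bound then follows directly from Lipschitz continuity: $F(\hat{\Pi}(k))\ge F^*-(L_s+L_t)\mathrm{dist}(\hat{\Pi}(k),\mathcal{X})$, combined with Jensen applied to the square root of the feasibility bound to produce the $-3\xi(L_s+L_t)\hat{\mu}_{1,k}-(L_s+L_t)\sqrt{R_k/(k\hat{\mu}_{1,k})}$ term.

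The hard part is handling the mismatch between the random projection set $\mathcal{L}^k$, built from the empirical distribution $\tilde{P}_t^k$, and the true feasible set $\mathcal{X}$, built from $P_t$: the Lyapunov recursion only controls $\mathrm{dist}(\cdot,\mathcal{L}^k)$, and it is the linear regularity constant $\xi$ from Assumption~\ref{assumption:assum3} that bridges to $\mathrm{dist}(\cdot,\mathcal{X})$. Threading $\xi$, $(L_s+L_t)$, and the weighted step-size totals $\hat{\mu}_{1,k},\hat{\mu}_{2,k}$ precisely through the expectation and averaging — so that the factor $k$ inside $R_k$ emerges naturally from a Cauchy-Schwarz estimate over the $k$ per-step deviation terms rather than a crude union bound — is the bookkeeping burden where errors are easiest to make.
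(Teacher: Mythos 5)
Your proposal follows essentially the same route as the paper's proof: a one-step descent inequality on $\|\Pi(k)-\Pi^{*}\|^{2}$ obtained from the prox optimality condition plus nonexpansiveness of the projection (with the $\mu_k^{2}(L_s+L_t)^{2}$ error term coming from $\|\Lambda(k)-\Pi(k)\|\le\mu_k(L_s+L_t)$), weighted averaging and Jensen for the upper bound, Lemma~\ref{lemma:lemma2} plus a quadratic inequality in $\sqrt{\mathbb{E}[\mathrm{dist}^{2}(\hat\Pi(k),\mathcal{X})]}$ for the feasibility bound, and the Lipschitz/optimality-condition argument for the lower bound. The only cosmetic difference is that you evaluate the objective at $\Lambda(k)$ while the paper routes the same estimate through the Moreau envelope $f_{\mu_k}(\Pi(k);x(k))$; the structure and constants are otherwise the same.
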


\begin{proof}
See Appendix \ref{app:thm_1}. 
\end{proof}

Corollary 1 below shows us that our solution generated by the federated learning algorithm converges. 

\begin{corollary}
Under the assumptions of Theorem 1, let $\{\Pi(k)\}_{k \geq 0}$ be the sequence generated by the federated learning algorithm with constant step size $\{\mu > 0\}$. Let $\epsilon > 0$ be the desired accuracy and $K$ be an integer satisfying:
\begin{align*}
    K \geq \frac{(L_s+L_t)^2\|\Pi^{0}-\Pi^{*}\|^2}{\epsilon^2} \max\{1,(3\xi+\sqrt{2\xi})^2\},
\end{align*}
and the step size be chosen as
\begin{align*}
    \mu = \frac{\epsilon}{(L_s+L_t)^2(3\xi+\sqrt{2\xi})}.
\end{align*}
Then, after $K$ iterations, the average point $\hat{\Pi}(K) = \frac{1}{K} \sum_{i=0}^{K-1} \Pi(i)$ satisfies:
\begin{align*}
\begin{split}
    \lvert \mathbb{E}[F(\hat{\Pi}(K)] - F^{*} \rvert \leq \epsilon \mbox{ and }
    \sqrt{\mathbb{E}[dist^{2}(\hat\Pi(k),\mathcal{X})]} \leq \epsilon.
    \end{split}
\end{align*}
\end{corollary}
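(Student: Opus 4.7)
The strategy is to apply Theorem~\ref{thm:1} with the constant schedule $\mu_k\equiv\mu$, specialize each of its three estimates in closed form, and then verify that the prescribed choices of $(K,\mu)$ drive every resulting term below $\epsilon$ (respectively $\epsilon^2$ for the squared distance, before taking the square root).

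First I would reduce the quantities appearing in Theorem~\ref{thm:1} to closed-form functions of $(K,\mu)$. With $\mu_k\equiv \mu$ we get $\hat\mu_{1,K}=K\mu$ and $\hat\mu_{2,K}=K\mu^2$, so
\begin{align*}
R_K = K\mu\,\xi\bigl(\|\Pi^{0}-\Pi^{*}\|^{2}+K^{2}(L_s+L_t)\mu^{2}\bigr).
\end{align*}
Inserting this into the upper sub-optimality bound of Theorem~\ref{thm:1} yields
\begin{align*}
\mathbb{E}[F(\hat\Pi(K))]-F^{*} \;\le\; \frac{\|\Pi^{0}-\Pi^{*}\|^{2}}{2K\mu}+\frac{K(L_s+L_t)\mu}{2},
\end{align*}
and into the lower bound produces
\begin{align*}
\mathbb{E}[F(\hat\Pi(K))]-F^{*} \;\ge\; -3\xi(L_s+L_t)\,K\mu-(L_s+L_t)\sqrt{\tfrac{R_K}{K\hat\mu_{1,K}}}.
\end{align*}
The feasibility estimate simplifies analogously into the sum of a term proportional to $(K\mu)^{2}$ and a term proportional to $\|\Pi^{0}-\Pi^{*}\|^{2}/(K\mu)+K(L_s+L_t)\mu$.

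Second, I would plug in the stated constants. The choice $\mu=\epsilon/\bigl[(L_s+L_t)^{2}(3\xi+\sqrt{2\xi})\bigr]$ is calibrated so that every piece carrying a single factor of $K\mu$ (the ``noise'' terms $3\xi(L_s+L_t)K\mu$ from the lower bound, and the analogous $\sqrt{2\xi}$-term produced after taking a square root of the feasibility bound) collapses to a small multiple of $\epsilon$. The assumed lower bound $K\ge (L_s+L_t)^{2}\|\Pi^{0}-\Pi^{*}\|^{2}\max\{1,(3\xi+\sqrt{2\xi})^{2}\}/\epsilon^{2}$ simultaneously forces the initialization-dependent pieces, which scale as $\|\Pi^{0}-\Pi^{*}\|^{2}/(K\mu)$, below the same threshold. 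Summing the two contributions gives $|\mathbb{E}[F(\hat\Pi(K))]-F^{*}|\le \epsilon$; an identical substitution on the feasibility bound yields $\mathbb{E}[\mathrm{dist}^{2}(\hat\Pi(K),\mathcal{X})]\le \epsilon^{2}$, and the second conclusion follows by taking square roots.

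The main obstacle is the simultaneous balancing. Each bound in Theorem~\ref{thm:1} contains one piece increasing in $K\mu$ (coming from the $K^{2}(L_s+L_t)\mu^{2}$ summand inside $R_K$) and one piece decreasing in $K\mu$ (the $\|\Pi^{0}-\Pi^{*}\|^{2}$ part), and the corollary must control sub-optimality \emph{and} feasibility with a single pair $(\mu,K)$. The role of the constant $3\xi+\sqrt{2\xi}$ in $\mu$ and the factor $\max\{1,(3\xi+\sqrt{2\xi})^{2}\}$ in the lower bound on $K$ is precisely to make the same step size work for both estimates; verifying this cross-compatibility, rather than any deeper inequality, is where the real work of the proof lies. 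Everything else is substitution and routine arithmetic.
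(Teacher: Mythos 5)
Your overall route is the same as the paper's: specialize Theorem~\ref{thm:1} to a constant step size, reduce $\hat\mu_{1,K}=K\mu$ and $\hat\mu_{2,K}=K\mu^{2}$, and then check that the prescribed $(\mu,K)$ force each bound below $\epsilon$. The only difference is one of direction — the paper derives $\mu$ by minimizing the upper bound for fixed $K$ and then reads off the required $K$, whereas you substitute the stated constants and verify — so in principle the two arguments are interchangeable.

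The problem is that you never carry out the decisive substitution, and with the bound exactly as you wrote it the verification fails. From $R_K=K\mu\,\xi\bigl(r_0^{2}+K^{2}(L_s+L_t)\mu^{2}\bigr)$ (with $r_0=\|\Pi^{0}-\Pi^{*}\|$) you correctly obtain
\begin{align*}
\mathbb{E}[F(\hat\Pi(K))]-F^{*}\;\le\;\frac{r_0^{2}}{2K\mu}+\frac{K(L_s+L_t)\mu}{2}.
\end{align*}
Now insert $\mu=\epsilon/\bigl[(L_s+L_t)^{2}(3\xi+\sqrt{2\xi})\bigr]$ and $K\ge (L_s+L_t)^{2}r_0^{2}\max\{1,(3\xi+\sqrt{2\xi})^{2}\}/\epsilon^{2}$: the second term, $\tfrac12 K(L_s+L_t)\mu$, is bounded \emph{below} by a quantity proportional to $(L_s+L_t)r_0^{2}/\epsilon$, so it scales like $1/\epsilon$ rather than shrinking to $\epsilon$, and it only gets worse as $K$ increases. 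Hence ``summing the two contributions gives $\le\epsilon$'' is false for the expressions you derived; you identified exactly this balancing act as the crux and then skipped it. The paper's own proof avoids the issue by optimizing a bound of the form $\frac{r_0^{2}}{2K\mu}+\frac{(L_s+L_t)^{2}\mu}{2}$ — a version in which the step-size term is $O(\mu)$, coming from $\hat\mu_{2,K}/\hat\mu_{1,K}=\mu$, with no extra factor of $K$ — which is what produces the $O(1/\sqrt{K})$ rate and hence the stated sample complexity. To close your argument you must either work from that averaged form of the estimate (equivalently, justify dropping the factor $k$ multiplying $(L_s+L_t)\hat\mu_{2,k}$ inside $R_k$) or concede that the stated $(\mu,K)$ do not control the bound you wrote down. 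As it stands the proposal is a plan, not a proof, and the plan breaks at the one step where the work actually is.
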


\begin{proof}
Let $k = K$ in Theorem 1. We want to obtain the lowest value of the right-hand side of the inequalities in Theorem 1 by choosing $\mu_i=\mu$ and minimizing $\mu$. Let $r_0 = \|\Pi^{0} - \Pi^{*}\|$, we get the following optimal smoothing parameter: 
\begin{align*}
    \mu = \sqrt{\frac{r_{0}^2}{K(L_s+L_t)^2}}
\end{align*}
for the optimal rate:
\begin{align*}
    \mathbb{E} [F(\hat{\Pi}(K))] - F^{*} \leq \sqrt{\frac{(L_s+L_t)^2r_{0}^2}{K}}
\end{align*}
Then, by using the optimal parameter $\mu$ and the relations of Theorem 1, we get
\begin{align*}
    \mathbb{E}[dist^{2}(\hat\Pi(k),\mathcal{X})] \leq \frac{r_{0}^2}{K}(18\xi^2  + r\xi)
\end{align*}
which is the feasibility bound and 
\begin{align*}
    \mathbb{E}[F(\hat{\Pi}(K))] - F^{*} \geq -(3\xi + \sqrt{2\xi}) \sqrt{\frac{(L_s+L_t)^2r_{0}^2}{K}}
\end{align*}
which is the lower suboptimality bound. Combining all the results gives us 
\begin{align*}
    K \geq \frac{(L_s+L_t)^2\|\Pi^{0}-\Pi^{*}\|^2}{\epsilon^2} \max\{1,(3\xi+\sqrt{2\xi})^2\}.
\end{align*}
\end{proof}
Thus, with a constant $\mu$ and a decent amount of iterations of the federated learning algorithm, we arrive at an optimal solution.

\section{Case Studies}\label{sec:cases}
In this section, we evaluate the performance of our federated learning algorithm through two distinct case studies. The first case study assesses the efficiency of the algorithm, and the second case study explores the algorithm’s resilience to changes during the learning process. In both case studies, we consider a scenario with three types of target nodes
and two source nodes. The transport network can be represented as a complete bipartite network graph. The utility functions are defined as: $t_{xy}(\pi_{xy}) = \delta_{xy}\pi_{xy}$ and $s_{xy}(\pi_{xy}) = \gamma_{xy}\pi_{xy}$, where
\begin{align*}
    [\gamma_{xy}]_{x \in \mathcal{X}, y \in \mathcal{Y}} = \begin{bmatrix}
2 & 2 \\
3 & 2 \\
1 & 4
\end{bmatrix}, 
[\delta_{xy}]_{x \in \mathcal{X}, y \in \mathcal{Y}} = \begin{bmatrix}
2 & 4 \\
2 & 2 \\
4 & 4
\end{bmatrix}.
\end{align*}
The upper bound of resources received by three types of targets is $\bar{p} = [2,3,4]$ and $\bar{q} = [4,4]*300$. The lower bounds, $\underline{p}$ and $\underline{q}$ are 0 for all nodes. The learning rate is defined as $\mu = \frac{0.5}{\sqrt{i}}$, where $i$ represents the iteration step. Furthermore, we consider $N=8000$ target nodes with a type distribution of $P_{t} = [0.5, 0.3, 0.2]$.

\subsection{Distributed Resource Allocation Efficiency}

\begin{figure}[!ht]
\vspace{-1mm}
  \centering
  \subfigure[Empirical PDF of Target Nodes]{
    \includegraphics[width=0.45\columnwidth]{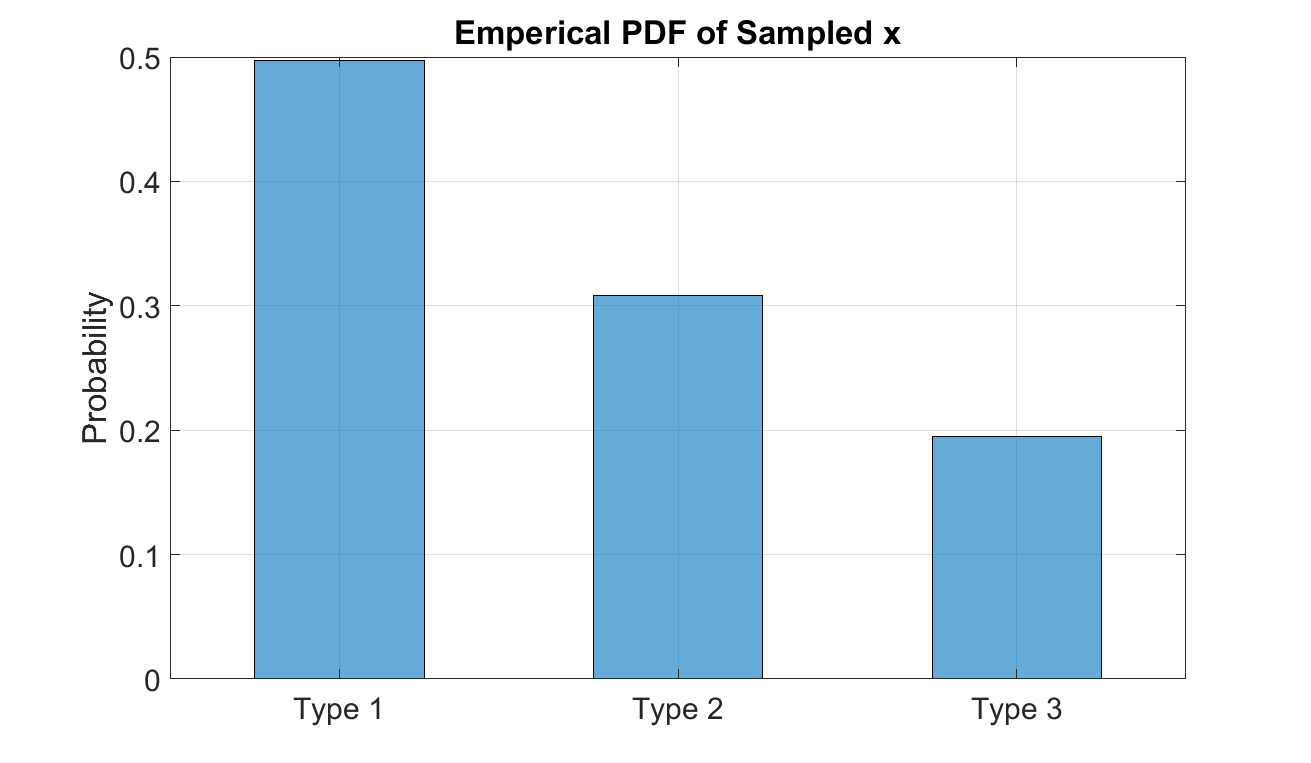}\label{fig:pdf}}
	 \subfigure[Transported Plan]{
    \includegraphics[width=0.45\columnwidth]{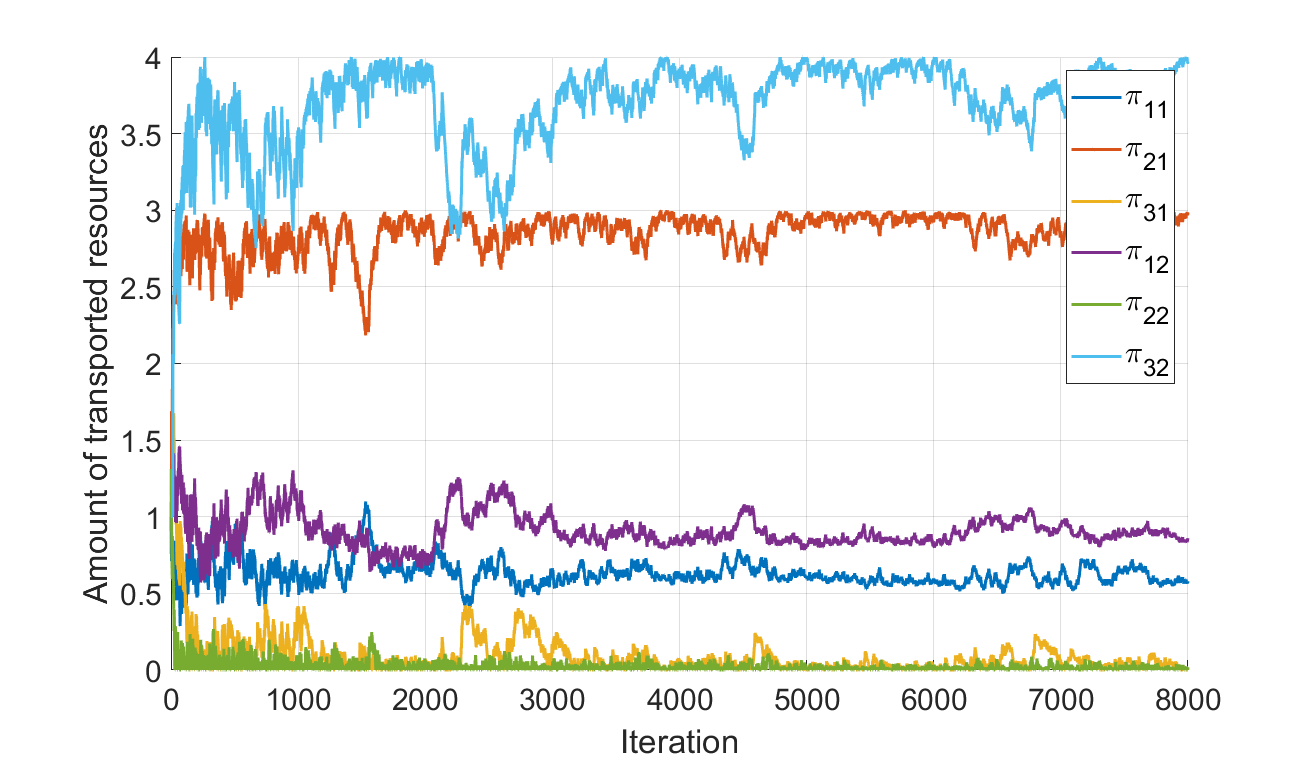}\label{fig:resources}}
    \subfigure[Transport Utility]{
    \includegraphics[width=0.45\columnwidth]{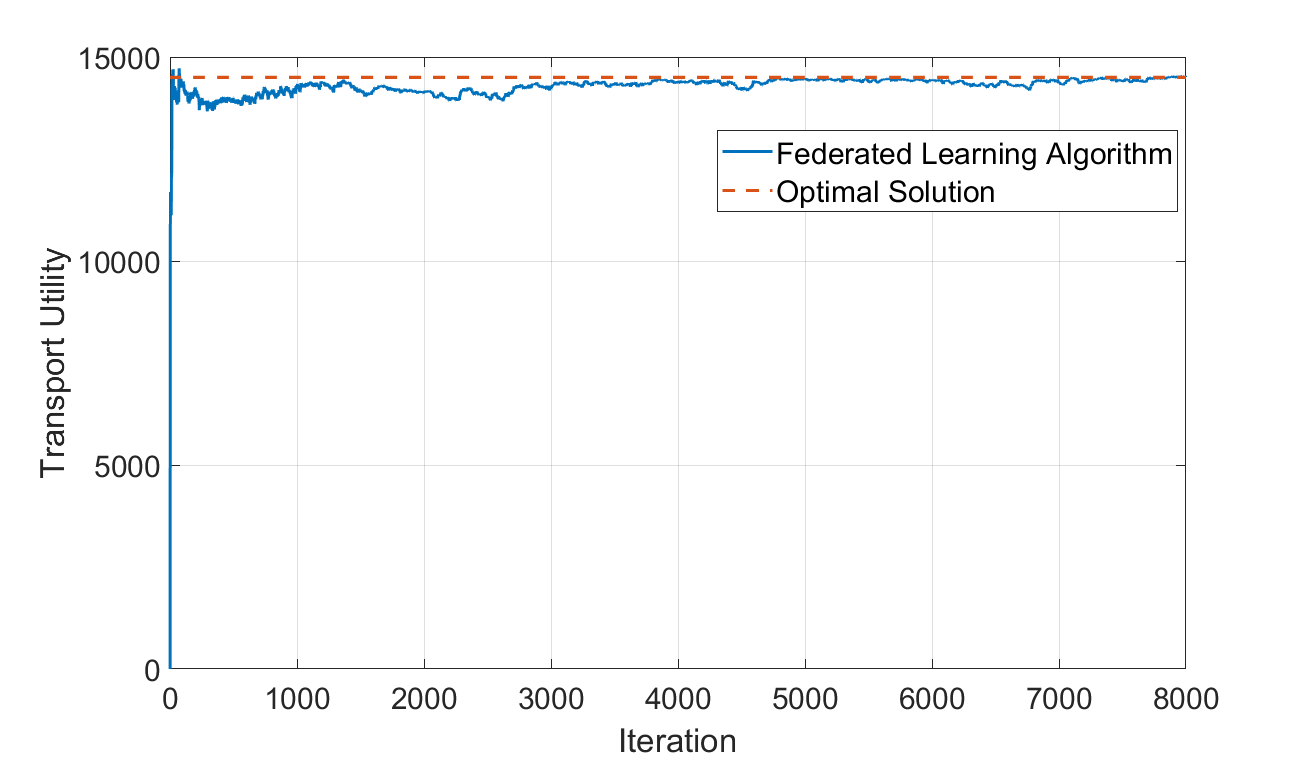}\label{fig:utility}}
    \subfigure[Received Resources at Targets]{
    \includegraphics[width=0.45\columnwidth]{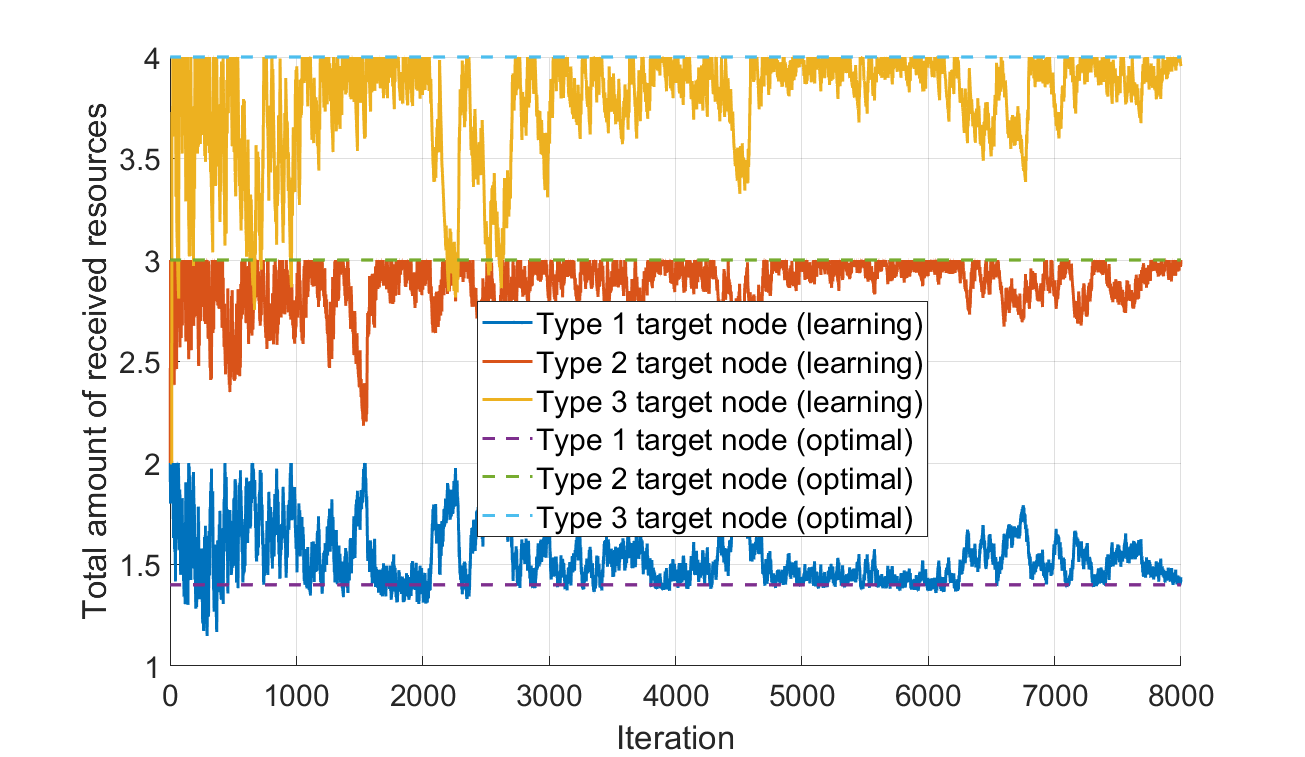}\label{fig:recresources}}\vspace{-1mm}
  \caption[]{(a): Distribution of the sampled targets' type data; (b): Transport plan; (c): Transport utility given by the federated learning algorithm; (d): Resource received by each type of target node.}
  \label{fig:AlgorithmResults}
\end{figure}

We show the efficiency of the designed federated learning algorithm by running Algorithm 2 for 8000 iterations ($N=8000$). The results are displayed in Figure \ref{fig:AlgorithmResults}. Figure \ref{fig:pdf} illustrates the PDF of the sampled types. The three target types—labeled as Type 1, Type 2, and Type 3—correspond to the type distribution previously described. The transport utility, which serves as the key metric for evaluating the algorithm’s efficiency, is shown in \ref{fig:utility}. By the end of the 8000 iterations, the utility values indicate that the federated learning algorithm successfully converged to the centralized optimal solution. Figure \ref{fig:resources} illustrates the corresponding resource allocation across the transport network, showing the distribution of resources from source nodes to the different target types. Finally, Figure \ref{fig:recresources} demonstrates that the total amount of resources received by each target type approaches the optimal allocation, confirming the algorithm's effectiveness in balancing resource demands across the network.

\subsection{Resilience in the Federated Learning Algorithm}
The second case study investigates the algorithm's ability to adapt to a dynamic environment. In real-world scenarios, changes in the network, such as fluctuations in the number of targets, shifts in type distribution, or varying preferences, are common. To simulate this, we introduce a shift in the target type distribution after 600 iterations, changing from $P_{t} = [0.50, 0.30, 0.20]$ to $P_{t} = [0.12, 0.65, 0.23]$. Figure \ref{fig:ResilienceAlgorithmResults} presents the results, showcasing how the algorithm adjusts to this new distribution and re-optimizes resource allocation in response to the changes.

\begin{figure}[!ht]
\vspace{-1mm}
  \centering
  \subfigure[Empirical PDF of Target Nodes]{
    \includegraphics[width=0.45\columnwidth]{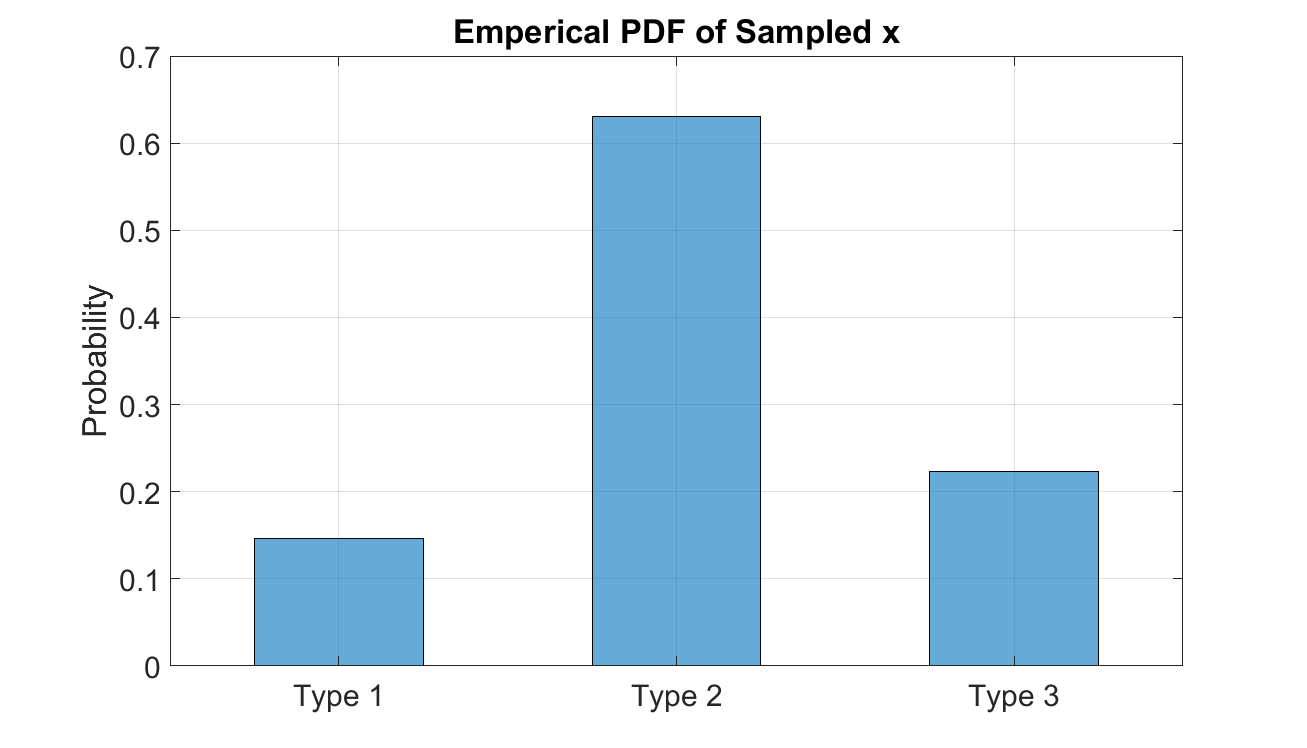}\label{fig:pdf1}}
	 \subfigure[Transported Plan]{
    \includegraphics[width=0.45\columnwidth]{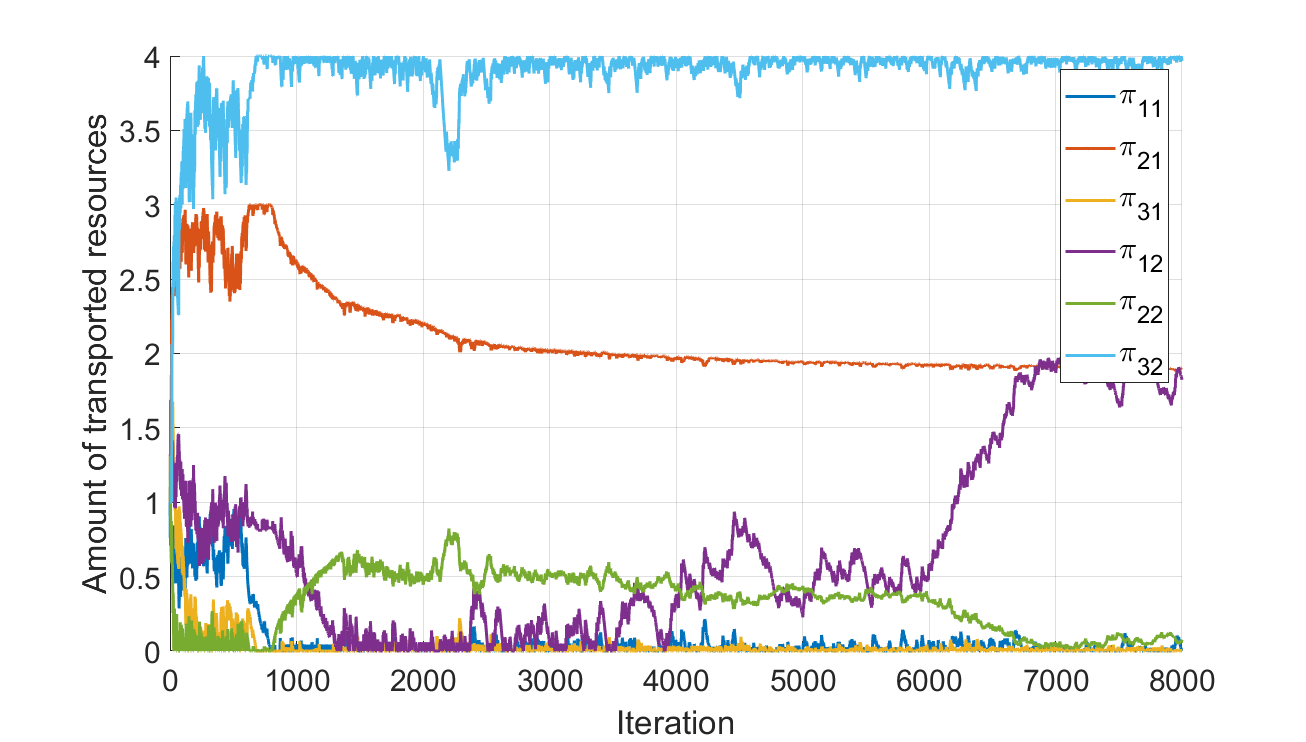}\label{fig:resources1}}
    \subfigure[Transport Utility]{
    \includegraphics[width=0.45\columnwidth]{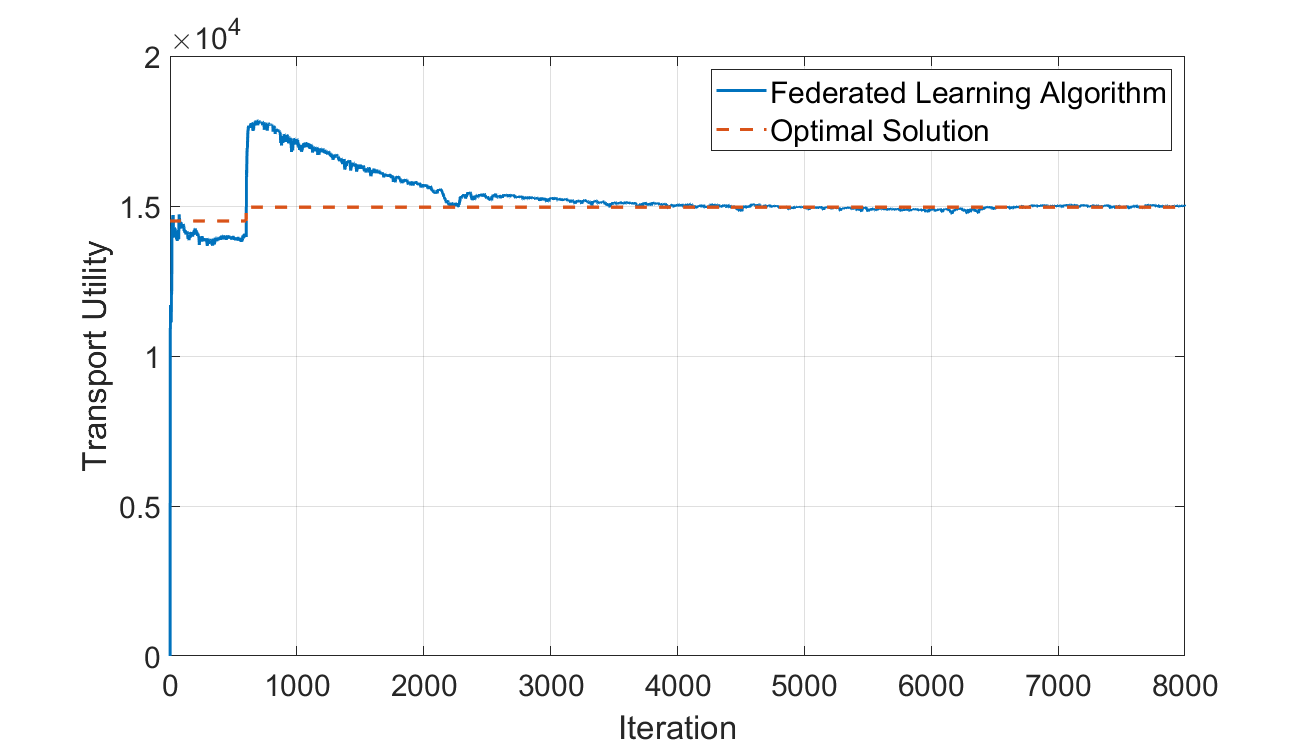}\label{fig:utility1}}
    \subfigure[Received Resources at Targets]{
    \includegraphics[width=0.45\columnwidth]{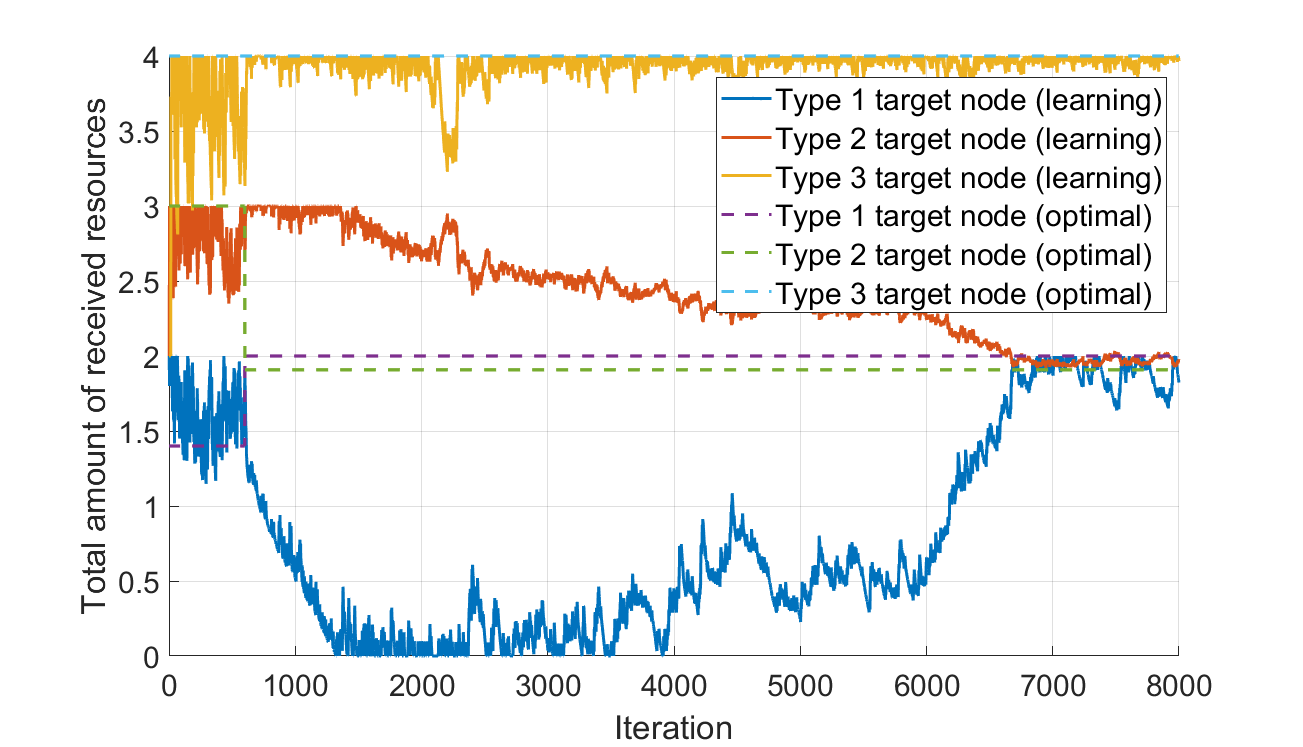}\label{fig:recresources1}}\vspace{-1mm}
  \caption[]{(a): Distribution of the sampled targets' type data incorporating the new distribution; (b): Transport plan before and after the update; (c): Transport utility given by the federated learning algorithm; (d): Resource received by each type of target node before and after the update.}
  \label{fig:ResilienceAlgorithmResults}
\end{figure}

Figure \ref{fig:pdf1} illustrates the updated probability distribution of the target nodes after 8000 iterations. The distribution gets closer to the new distribution but is not completely exact, as it accounts for the samples that are part of the old distribution. Figure \ref{fig:resources1} highlights the adjustments made in the resource allocation scheme after the distribution shift at iteration 600.  This adaptability is further emphasized in Figure \ref{fig:utility1}, where the transport utility indicates a smooth transition and convergence toward a new optimal solution. As shown in Fig. \ref{fig:recresources1}, the federated learning algorithm swiftly adapts to the changes, rebalancing the resource allocation as new data is gathered, ultimately converging to an optimal transport plan in the new environment.

\section{Conclusion}\label{sec:conclusion}

In this paper, we proposed and explored an optimal transport framework designed to handle large-scale, heterogeneous target populations, addressing both scenarios where the target type distribution is known and where it is unknown. Through the development of a distributed ADMM algorithm and a federated learning-based approach, we demonstrated the capability of efficiently computing optimal transport solutions in complex environments. Our federated learning algorithm, in particular, proves to be highly effective in scenarios involving incomplete information, adapting dynamically while preserving the privacy of individual nodes. Future work will develop distributed and privacy-preserving federated learning for large-scale optimal transport.

\bibliographystyle{IEEEtran}
\bibliography{IEEEabrv,references}

\appendix 

\subsection{Proof of Lemma \ref{lemma:lemma1}}\label{app:lemma_1}
\begin{proof}
The optimality condition of problem $\min_{z \in \mathbb{R}^n}f(z;x_{i}) + \frac{1}{2\mu} \| \Pi - z \|^2$ is given by: 
$$
\frac{1}{\mu}(\Pi - z_{\mu}(\Pi;x_{i})) \in \delta f (z_{\mu}(\Pi;x_{i});x_{i}).
$$
The above inclusion implies $g_{f}(z_{\mu}(\Pi;x_{i});x_{i}) \in \delta f(z_{\mu}(\Pi;x_{i});x_{i}),$ such that:
\begin{align*}
&\frac{1}{\mu} \|z_{\mu}(\Pi;x_{i}) - \Pi \|^2 \\ = & \langle g_{f}(z_{\mu}(\Pi;x_{i});x_{i}), \Pi - z_{\mu}(\Pi;x_{i})\rangle\\
=&\langle g_f(\Pi, x_{i}), \Pi- z_{\mu}(\Pi,x_{i}) \rangle \\ &+ \langle g_f(z_{\mu}(\Pi;x_{i});x_{i}) - g_{f}(\Pi;x_{i}), \Pi - z_{\mu}(\Pi;x_{i}) \rangle
\\ \leq & \langle g_{f}(\Pi; x_{i}), \Pi - z_{\mu}(\Pi; x_{i}) \rangle,
\end{align*}
where the inequality follows from the convexity of $f$.
Finally, the lemma follow by applying the Cauchy-Schwarz inequality on the right-hand.
\end{proof}

\subsection{Proof of Theorem \ref{thm:1}}\label{app:thm_1}

For any $\mu>0$, the function 
$
f(\Pi;x_{i}) + \frac{1}{2\mu}\|\Pi-\Pi_0\|^2 
$ 
is strongly convex, we have: 
\begin{align*}
&f(\Pi;x_{i}) + \frac{1}{2\mu}\|\Pi-\Pi_0)\|^2 
\\
\geq &f(z_{\mu}(\Pi_0;x_{i});x_{i}) 
+ \frac{1}{2\mu}\|z_{\mu}(\Pi_0;x_{i})-\Pi_0\|^2 
+ \frac{1}{2\mu}\|z_{\mu}(\Pi_0;x_{i})-\Pi\|^2 \\
= &f_{\mu}(\Pi_0;x_{i}) + \frac{1}{2\mu}\|z_{\mu}(\Pi_0;x_{i})-\Pi\|^2,\quad \forall \Pi  \in \mathbb{R}^n
\end{align*}
Let $\Pi_0 = \Pi_{x}(k), x_{i} = x(k), \Pi  = \Pi^*$, then, the nonexpansive property of the projection operator implies:
\begin{align}
\| \Pi_0 - \Gamma_{\mathcal{L}^{k}}(\Pi_{0}) \|^2 \leq \| \Pi_{0} - \Pi\|^2 - \|\Pi - \Gamma_{\mathcal{L}^{k}}(\Pi_{0})\|^2, \quad  \nonumber\\
\forall \Pi \in \mathcal{L}^{k}, \Pi_{0} \in \mathbb{R}^{n}. \label{THEOREM_eqn1}
\end{align}
recall that $\Gamma$ represents the projection onto the feasible set. Plug it back to the previous inequality evaluated at $x(k)$ and $\mu_k$, we have
\begin{align}
&f(\Pi^{*}, x(k)) + \frac{1}{2\mu_k} \| \Pi(k) - \Pi^{*} \|^{2} \nonumber \\\geq &f_{\mu_k}(\Pi(k); x(k)) + \frac{1}{2\mu_k} \| \Lambda(k) - \Pi^{*} \|^2 \nonumber\\
\geq & f_{\mu_k}(\Pi(k);x(k)) +  \frac{1}{2\mu_k} \| \Gamma_{\mathcal{L}^{k}} (\Lambda(k)) - \Pi^{*} \|^2 + \nonumber\\ &\frac{1}{2\mu_k} \| \Lambda(k) - \Gamma_{\mathcal{L}^{k}}(\Lambda(k))\|^2
\nonumber\\
= &f_{\mu}(\Pi(k);x(k)) + \frac{1}{2\mu_k} \| \Pi(k+1) - \Pi^{*} \|^2 + \nonumber\\ &\frac{1}{2\mu_k} \| \Lambda(k) - \Pi(k+1) \|^2,\label{THEOREM_eqn2}
\end{align}
The inequalities follow from~\eqref{THEOREM_eqn1} with $\Pi_{0} = \Lambda(k)$ and $\Pi = \Pi^{*}$.

Let $\mathbb{I}_{\mu,x_{i}} = \frac{1}{2\mu} \|\Pi_{x} - \Gamma_{\mathcal{L}^{k}}(\Pi_{x}) \|^2$ and~\eqref{THEOREM_eqn2} leads to,
\begin{align*}
&\mu_k(f(\Pi(k);x(k)) - f(\Pi^{*}; x(k)) + \mathbb{I}_{1,x(k)}(\Lambda(k)) - \frac{\mu_k^2}{2}L^{2}_{f,x(k)}  \\
\stackrel{(1)}{\leq} & \mu_k (f(\Pi(k); x(k)) - f(\Pi^{*}; x(k)) + \mathbb{I}_{1,x(k)}(\Lambda(k)) \\ &- \frac{\mu_k^2}{2}\| \nabla f (\Pi(k);x(k)) \|^2
\\\stackrel{(2)}{=} &\mu_k(f(\Pi(k);x(k)) - f(\Pi^{*};x(k)) + \mathbb{I}_{1,x(k)}(\Lambda(k)) \\& + \min_{\Pi_{x_{i}} \in \mathbb{R}^{n}} [\mu_k \langle \nabla f(\Pi(k);x(k)), \Pi_{x_{i}} - \Pi(k) \rangle + \frac{1}{2} \| \Pi_{x_{i}} - \Pi(k) \|^2]
\\ \stackrel{(3)}{\leq}& \mu_k(f(\Pi(k);x(k)) - f(\Pi^{*};x(k)) + \mathbb{I}_{1,x(k)}(\Lambda(k)) \\ &+ \mu_k \langle \nabla f (\Pi(k);x(k)),\Lambda(k) - \Pi(k) \rangle + \frac{1}{2} \| \Lambda(k) - \Pi(k) \|^2
\\ \stackrel{(4)}{=} & \mu_k (f(\Pi(k);x(k)) + \langle \nabla f(\Pi(k); x(k)), \Lambda(k) - \Pi(k) \rangle \\    &+ \frac{1}{2\mu_k} \| \Lambda(k) - \Pi(k) \|^2 - f(\Pi^{*}; x(k)) + \mathbb{I}_{1,x(k)}(\Lambda(k))
\\ \stackrel{(5)}{\leq} & \mu_k (f_{\mu}(\Pi(k);x(k)) - f(\Pi^{*};x(k)) + \mathbb{I}_{1,x(k)} (\Lambda(k))
\\ \stackrel{(6)}{\leq} &\frac{1}{2} \| \Pi(k) - \Pi^{*} \|^2 - \frac{1}{2} \|\Pi(k+1) - \Pi^{*} \|^2,
\end{align*}
where (1) is due to the Lipschitz assumption, (2) is due to the property of the quadratic function, (3) is obtained naturally after the minimizing operator is removed, (4) is a reorganization, (5) is the result of $f_{\mu_k}(\Pi(k);x(k))\ge f(\Pi(k);x(k)) + \langle \nabla f(\Pi(k); x(k)), \Lambda(k) - \Pi(k) \rangle +\frac{1}{2\mu_k} \| \Lambda(k) - \Pi(k) \|^2$, a combination of convexity and optimality, and (6) follows~\eqref{THEOREM_eqn2}.

Taking conditional expectation in $x(k)$ w.r.t. the history $F_{k-1} = \{x(0), ..., x(k-1)\} $ in the last inequality we get:
\begin{align*}
&\mu_k(F(\Pi(k)) - F(\Pi^{*}) + \mathbb{E} [\mathbb{I}_{1,x(k)} (\Lambda(k)) | F_{k-1}] - \frac{\mu_k^2}{2} (L_s+L_t) \\\leq & \frac{1}{2} \mathbb{E} [\| \Pi(k) - \Pi^{*} \|^{2} - \frac{1}{2}\mathbb{E}[\|\Pi(k+1) - \Pi^{*}\|^{2}| F_{k-1}]
\end{align*}

Taking further the expectation over $F_{k-1}$ and summing over $i = 0, ..., k-1$, we get:
\begin{align*}
&\frac{\| \Pi^{0}-\Pi^{*} \|^{2}}{2\sum^{k-1}_{i=0}\mu_i}\\ \geq &\frac{1}{k{\hat \mu}_{1,k}} \sum^{k-1}_{i=0} \mathbb{E} [\mu_i(F(\Pi(i)) - F(\Pi_{x}^{*}))] + \mathbb{E} [\mathbb{I}_{1,x_{i}}(\Lambda(i))] - \frac{\mu_i^{2}}{2} (L_s+L_t)
\\=& \frac{1}{k{\hat \mu}_{1,k}} \sum^{k-1}_{i=0} \mathbb{E}[\mu_i(F(\Pi(i)) - F(\Pi_{x}^{*}))] + \mu_i\mathbb{E}[\mathbb{I}_{\mu,x_{i}}(\Lambda(i))] - \frac{\mu_i^2}{2} (L_s+L_t)
\end{align*}
\begin{equation}\label{THEOREM_eqn3}
\geq \mathbb{E}[F(\hat\Pi(k)) - F(\Pi^{*}))] + \mathbb{E}[\mathbb{I}_{\mu,x_{i}}(\hat\Lambda(k))] - \frac{\hat{\mu}_{2,k}(L_s+L_t)}{2\hat{\mu}_{1,k}}.
\end{equation} 

Inequality~\eqref{THEOREM_eqn3} implies the following upper bound on the optimality gap:
\begin{equation}\label{THEOREM_eqn4}
\mathbb{E}[F(\hat\Pi(k)) - F(\Pi^{*})] \leq \frac{\| \Pi^{0} - \Pi^{*} \|^2 + \frac{\hat{\mu}_{2,k}(L_s+L_t)}{\hat{\mu}_{1,k}}}{2}.
\end{equation}
Let $\nabla F(\Pi^{*}) = \mathbb{E}[\nabla f(\Pi^{*};x_{i})]$, we use the following fact:
\begin{align}
&\mathbb{E}[F(\hat\Pi(k))] - F(\Pi^{*}) \nonumber\\\geq &\mathbb{E}[ \langle F (\Pi^{*}), \hat\Pi(k) - \Pi^{*} \rangle]
\nonumber\\
\geq & - (L_s+L_t) \mathbb{E}[dist(\hat\Pi(k),\mathcal{X})], \quad \forall k \geq 0
\label{eqn:b}
\end{align}
which is derived from the optimality conditions $\langle F (\Pi^{*}), z - \Pi^{*} \geq 0$ for all $z \in \mathcal{X}$, Cauchy-Schwarz and Jensen's inequalities. Denoting $r_{0} = \| \Pi^{0} - \Pi^{*} \|$, combing the last two inequalities with Lemma~\ref{lemma:lemma2}, we obtain:

\begin{align*}
&\mathbb{E}[dist^{2}(\hat\Pi(k),\mathcal{X})] - \xi (L_s+L_t)\hat{\mu}_{1,k} \sqrt{\mathbb{E}[dist^{2}(\hat\Pi(k),\mathcal{X})]}
\\\leq & 2\hat{\mu}_{1,k} \xi \mathbb{E} [F(\hat\Pi(k)) - F(\Pi^{*})] + 2 \hat{\mu}_{1,k} \xi (L_s+L_t)\mathbb{E} [\mathbb{I}_{\mu,x_{i}}(\hat \Lambda(k))]\\
\leq & \frac{\xi r_{0}^{2}}{k} + \xi (L_s+L_t)\hat{\mu}_{2,k}.
\end{align*}
This relation clearly implies an upper bound on the feasibility residual:

\begin{align*}
\sqrt{\mathbb{E}[dist^{2}(\hat\Pi(k),\mathcal{X})]} \leq \xi (L_s+L_t) (3 \hat{\mu}_{1,k}) + \sqrt{\frac{\xi r^{2}_{0}}{k} + \xi (L_s+L_t)^2 \hat{\mu}_{2,k}}.
\end{align*}

Combining this inequality with \eqref{eqn:b}, we obtain the lower bound on the suboptimality gap:
\begin{align*}
&\mathbb{E} [F(\hat \Pi (k))] - F^{*} \\\geq & \xi (L_s+L_t)^2(3\hat{\mu}_{1,k}) - (L_s+L_t) \sqrt{\frac{\xi r_{0}^{2}}{k} + \xi (L_s+L_t)^2 \hat{\mu}_{2,k}}.
\end{align*}
Combined, we deduce our convergence rate results. 

\end{document}